\documentclass[lettersize,journal]{IEEEtran}
\usepackage{amsmath,amsfonts}
\usepackage{algorithmic}
\usepackage{algorithm}
\usepackage{array}
\usepackage{textcomp}
\usepackage{stfloats}
\usepackage{url}
\usepackage{verbatim}
\usepackage{graphicx}
\usepackage{cite}

\usepackage{amsthm}
\usepackage[table]{xcolor}
\usepackage{mdframed}

\newtheorem{lemma}{Lemma}
\newtheorem{theorem}{Theorem}

\usepackage[labelformat=simple]{subcaption}

\usepackage{multirow}
\usepackage{booktabs} 
\usepackage{amssymb}
\hyphenation{op-tical net-works semi-conduc-tor IEEE-Xplore}
% updated with editorial comments 8/9/2021

\begin{document}

\title{Enhancing Quantization-Aware Training on Edge Devices via Relative Entropy Coreset Selection and Cascaded Layer Correction}

\author{Yujia Tong, Jingling Yuan,~\IEEEmembership{Senior Member,~IEEE}, Chuang Hu,~\IEEEmembership{Member,~IEEE}
    \IEEEcompsocitemizethanks{
     \IEEEcompsocthanksitem  Yujia Tong, Jingling Yuan are with Hubei Key Laboratory of Trasportation Internet of Things, School of Computer Science and Artificial Intelligence, Wuhan University of Technology, Hubei 430072, China. (E-mail: \{tyjjjj, yjl\}@whut.edu.cn.)
    \IEEEcompsocthanksitem  Chuang Hu is with the School of Computer Science, Wuhan University, Hubei 430072, China. (E-mail: handc@whu.edu.cn.)
    }
}

\maketitle

\begin{abstract}
With the development of mobile and edge computing, the demand for low-bit quantized models on edge devices is increasing to achieve efficient deployment. To enhance the performance, it is often necessary to retrain the quantized models using edge data. However, due to privacy concerns, certain sensitive data can only be processed on edge devices. Therefore, employing Quantization-Aware Training (QAT) on edge devices has become an effective solution. Nevertheless, traditional QAT relies on the complete dataset for training, which incurs a huge computational cost.  Coreset selection techniques can mitigate this issue by training on the most representative subsets. However, existing methods struggle to eliminate quantization errors in the model when using small-scale datasets (e.g., only 10\% of the data), leading to significant performance degradation. To address these issues, we propose QuaRC, a QAT framework with coresets on edge devices, which consists of two main phases: In the coreset selection phase, QuaRC introduces the “Relative Entropy Score” to identify the subsets that most effectively capture the model’s quantization errors. During the training phase, QuaRC employs the Cascaded Layer Correction strategy to align the intermediate layer outputs of the quantized model with those of the full-precision model, thereby effectively reducing the quantization errors in the intermediate layers. Experimental results demonstrate the effectiveness of our approach. For instance, when quantizing ResNet-18 to 2-bit using a 1\% data subset, QuaRC achieves a 5.72\% improvement in Top-1 accuracy on the ImageNet-1K dataset compared to state-of-the-art techniques.

\end{abstract}

\begin{IEEEkeywords}
Quantization-Aware Training, edge computing, coreset selection, efficient training.
\end{IEEEkeywords}

\section{Introduction}

\IEEEPARstart{W}{ith} the rapid growth of mobile devices and edge computing, real‐time computer vision tasks—such as scene classification\cite{cheng2017remote} and crack detection\cite{zakeri2017image}—are increasingly executed on smartphones, laptops, and Unmanned Aerial Vehicles (UAVs). These applications must run under stringent compute and battery limits \cite{shuvo2022efficient, zhao2022survey}. Although lightweight architectures like MobileNetV2 \cite{howard2017mobilenets} reduce overhead, they still struggle to meet real‐time latency targets on image data.

Model quantization \cite{chen2022energy, luo2025bi} addresses this gap by lowering the bitwidth of weights and activations, cutting both storage and arithmetic cost. Quantization splits into Post‐Training Quantization (PTQ) \cite{fang2020post, wang2020towards} and Quantization‐Aware Training (QAT) \cite{zhou2016dorefa, esser2020learned}. PTQ applies quantization to a trained model without retraining, but accuracy degrades sharply at low bitwidths ($\leq$4 bits). QAT, by simulating quantization effects during training, adapts the model to quantization noise and retains higher accuracy. As a result, QAT is the preferred route for aggressive bitwidth reduction on edge devices.

Conventional QAT pipelines, however, face many limitations in practical applications. To obtain a quantized model, traditional QAT usually involves transmitting data generated on the edge devices to the cloud, where the model is trained with quantization using this data. The quantized model is then transmitted back to the edge devices. However, due to data privacy and communication latency issues, data generated on edge devices often needs to be processed locally. Therefore, QAT needs to be performed directly on the edge devices. Traditional QAT methods require training with the entire dataset, which incurs significant computational and time overheads. This is unacceptable for edge devices. As a result, improving the efficiency of QAT on edge devices has become a critical research direction. 

A promising strategy to mitigate the computational overhead of QAT is the utilization of \textit{coreset} techniques ~\cite{welling2009herding,paul2021deep,killamsetty2021grad,sorscher2022beyond}. These methods identify a representative subset of the dataset, creating a smaller yet statistically and structurally representative training set that closely approximates the original data. By applying coreset selection, the training process can be substantially accelerated, reducing computational costs while preserving the model's performance and robustness. 

Applying coresets to QAT poses two main challenges. First, \textit{how to select samples that accurately capture quantization errors?} Quantization errors—introduced by reduced precision—directly drive accuracy loss. Existing coreset metrics (e.g., error vector score, disagreement score \cite{huangrobust}) do not explicitly measure a sample’s contribution to quantization errors. Precise selection criteria are therefore needed to ensure the coreset highlights the most error‐sensitive inputs.

Second, \textit{how to retrain on a small coreset so as to minimize quantization error propagation?} Standard QAT losses, possibly augmented with knowledge distillation, work well on full datasets but falter on small subsets. Limited samples fail to correct accumulated errors in intermediate layers, leading to degraded performance. A tailored training strategy is required to suppress layer‐wise quantization errors when data are scarce.

In this paper, we address two key challenges in coreset selection for QAT, aiming to mitigate the performance degradation of quantized models when using a small-scale coreset. We propose QuaRC, a 
\textit{\textbf{Qua}}ntization-aware training framework on edge devices via \textit{\textbf{R}}elative entropy coreset selection and \textbf{\textit{C}}ascaded layer correction. Specifically, during the coreset selection phase, we first input the same samples into both the full-precision model and the quantized model, and calculate the difference in their outputs using relative entropy. We find that samples with higher relative entropy are more capable of capturing quantization errors, and training with such samples can improve model accuracy. To verify this phenomenon, we test the impact of samples with different relative entropies on the accuracy of quantized models and calculate the Spearman correlation coefficient between relative entropy and model performance. The results show a strong correlation between the two. Based on this observation, we propose the \textbf{Relative Entropy Score (RES)} as a criterion for coreset selection. In the quantized model training phase, to address the accumulation of quantization errors across layers, we introduce a \textbf{Cascaded Layer Correction (CLC)} strategy. This approach aligns the intermediate layer outputs of the quantized model with those of the full-precision model, thereby reducing errors at intermediate layers. We conduct extensive experiments to validate our method. Compared to previous methods, our approach can significantly enhance the performance of the quantized model when selecting a small-scale coreset (\(10\%\) or less) for QAT. 

In summary, our contributions are:
\begin{itemize}
   \item We propose a new metric for coreset selection—Relative Entropy Score (RES). This metric can be used to select the samples that accurately capture quantization errors.
   \item We propose the Cascaded Layer Correction (CLC) training strategy that can effectively eliminate the quantization errors in the intermediate layers of the model.
   \item We conduct extensive experiments to validate our method. Compared to previous methods, our approach can significantly enhance the performance of the quantized model when training with a small-scale coreset (\(10\%\) or less). 
   \item To further demonstrate the effectiveness of QuaRC in the practical scenario, we conduct a case study on an unmanned aerial vehicle (UAV) for concrete crack
detection.

\end{itemize}

\section{Background and Motivation}
In this section, we first introduce the background of QAT. Subsequently,  we discuss the challenges of existing coreset selection methods when applied to QAT, which motivates our proposed method.

\subsection{Quantization-Aware Training}
To simulate the rounding and clamping errors of the quantized model during the inference process, QAT \cite{li2022q,nagel2022overcoming} introduces fake quantization nodes that perform quantization and dequantization operations on floating-point numbers. For \( n \)-bit signed quantization, given a scaling factor \( s \), the full-precision number \( x^r \) is converted to \( x^q \) at the fake quantization node through quantization and dequantization operations:
\begin{equation}
\label{eq:quantization}
x^q = q(x^r) = s \times \left\lfloor \text{clamp}\left(\frac{x^r}{s}, -Q_N, Q_P\right) \right\rceil
\end{equation}
where \( \left\lfloor \cdot \right\rceil \) is an operation used to round its input to the nearest integer. The function \(  clamp(x, r_{\text{low}}, r_{high}) \)  returns \( x \) while ensuring that values below \( r_{\text{low}} \) are set to \( r_{\text{low}} \), and values above \( r_{\text{high}} \) are set to \( r_{\text{high}} \).

Typically, in a neural network where both activations and weights are quantized, the forward and backward propagation processes can be simply represented as:

\begin{align}
\label{eq:process}
    &\text{Forward:} \quad \text{Output}(x) = x^q \cdot w^q = q(x^r) \cdot q(w^r)\notag\\
    &\text{Backward:} \quad \frac{\partial \mathcal{J}}{\partial x^r} = \left\{
    \begin{array}{ll}
    \frac{\partial \mathcal{J}}{\partial x^q} & \text{if } x \in [-Q_N^x, Q_P^x] \\
    0 & \text{otherwise}
    \end{array}
    \right. \\
    &\quad\quad\quad\quad\quad \frac{\partial \mathcal{J}}{\partial {\bf w^r}} = \left\{
    \begin{array}{ll}
    \frac{\partial \mathcal{J}}{\partial {\bf w^q}} & \text{if } {\bf w} \in [-Q_N^{\bf w}, Q_P^{\bf w}] \\
    0 & \text{otherwise}\notag
    \end{array}
    \right.
\end{align}
where \( \mathcal{J} \) is the loss function, \( q(\cdot) \)  is employed during forward propagation, and the straight-through estimator (STE) \cite{bengio2013estimating} is utilized to maintain the derivation of the gradient during backward propagation.

In image classification tasks\cite{chen2021crossvit}, neural networks primarily extract features and enable effective classification. We can regard this process as a transformation from input data to output results. 
Specifically, the workflow entails mapping the input image \( x \) to the predicted probability distribution \( p(w, x) \) via the forward propagation function \( f(w, x) \) of the neural network, where \( w \) denotes the network's weight parameters. For any sample \( (x,y) \) in the coreset, we input it into both the full-precision and quantized models. The resulting output probability distributions (logits) are represented by:
\begin{equation}
\label{eq:output}
   p_{\mathbf{F}}(w^r,x^r)=f(w^r,x^r),  p_{\mathbf{Q}}(w^q,x^q)=f(w^q,x^q)
\end{equation}

In QAT, a significant challenge is reducing quantization errors, which propagate and accumulate in the deeper layers of the model.  Knowledge distillation (KD), when applied to  QAT, can reduce quantization errors of quantized models ~\cite{mishra2018apprentice,huang2022sdq,mishra2017apprentice,liu2023oscillation}. Typically, the full-precision model acts as the teacher, and the quantized model acts as the student. The objective of the KD loss function is to minimize the disparity between the output distributions of the final layer of the full-precision model and the quantized model. This is achieved through the following formulation:
\begin{equation}
\label{eq:kd}
\mathcal{L}_{KD} = -\sum_{m}^M
p_{\mathbf{T}}^{(m)}(w^r,x_i) \log(p_{\mathbf{Q}}^{(m)}(w^q,x_i))
\end{equation}
where \( \mathcal{L}_{KD} \) is the KD loss function, defined as the cross-entropy between the output distributions \( p_{\mathbf{T}} \) and \( p_{\mathbf{Q}} \) of the full-precision teacher model and the quantized student model, and \( m \) denotes the classes. The teacher model can be a full-precision model corresponding to the quantized model, or it can be a full-precision model of the same type with a deeper number of layers.

However, traditional QAT requires retraining the quantized model using the entire dataset, which leads to substantial computational resources and time overhead. For example, even with a small dataset like CIFAR-100, the QAT of MobileNetV2 takes 75.22 minutes, which is unacceptable for edge devices. Therefore, enhancing the efficiency of QAT is critical.

\begin{figure*}[t]
\centering
\includegraphics[width=1\textwidth]{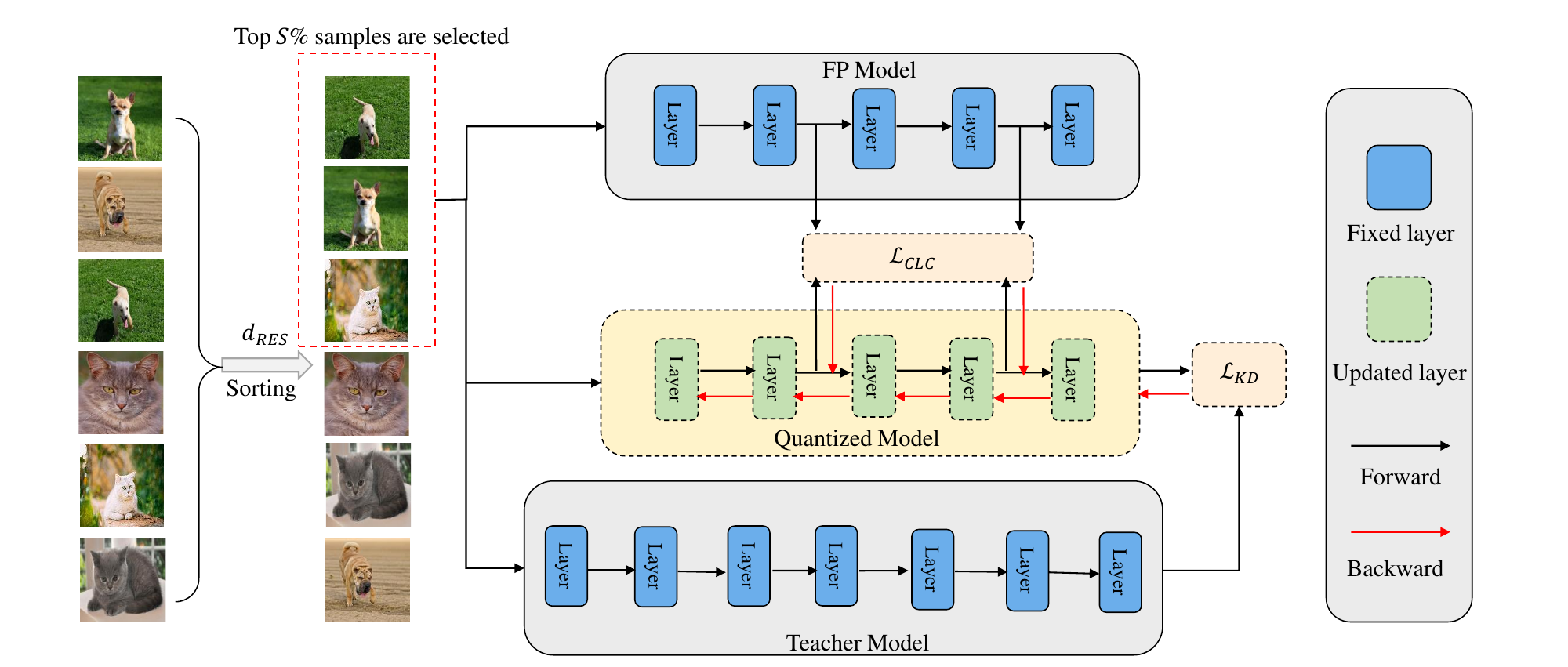} % Reduce the figure size so that it is slightly narrower than the column.
\caption{The overview of QuaRC.
}
\label{fig1}
\end{figure*}

\subsection{Coreset Selection for QAT}
Coreset selection techniques can enhance the efficiency of QAT by identifying the most representative subsets for training. However, most existing coreset selection methods are designed for full-precision models and do not account for the unique characteristics of quantized models. For instance, classical methods such as Moderate \cite{xia2023moderate}, Contextual Diversity (CD) \cite{agarwal2020contextual}, and Forgetting \cite{toneva2018an} rely on various criteria to select  samples. Moderate and CD use geometry-based criteria to select samples that can approximate the distribution of the full dataset, while Forgetting focuses on the frequency of forgetting events to identify samples that are more difficult to learn and potentially more informative for training. While these methods perform well for full-precision models, their performance in QAT is even worse than that of random sampling.

Recent work like ACS \cite{huangrobust} attempts to bridge this gap by introducing metrics such as the error vector score and disagreement score.  These metrics aim to identify the samples that contribute the most to the parameter updates of the quantized model. However, they fail to consider the correlation between samples and quantization errors, which can be explicitly manifested as the differences between the output logits of full-precision and quantized models.

\subsection{Challenge and Motivation}
Quantization errors primarily originates from two sources: 1) the rounding and clamping operations during fake quantization, which distort the forward propagation of activations and weights, as reflected in Eq.(\ref{eq:quantization}), and 2) the use of STE during backpropagation, which introduce approximation errors in gradient updates, as reflected in Eq.(\ref{eq:process}). Existing coreset selection criteria fail to prioritize samples that expose these errors during QAT, thereby limiting the model's ability to adapt to quantization noise.

Moreover, quantization errors accumulate and propagate through the intermediate layers of the model. To mitigate these errors, existing QAT methods employ KD strategies during training. However, traditional KD methods focus solely on optimizing the final output. When training with the full dataset, the large volume of data can effectively eliminate the errors in the intermediate layers of the quantized model. But when the coreset size is small (e.g., 10\% of the dataset), even using KD strategies fails to eliminate the errors in the intermediate layers of the quantized model.

The above challenges motivate us to design: 1) a coreset selection method that can \textbf{reflect the quantization errors} of the model, and 2) a training strategy that can \textbf{eliminate the quantization errors} of the model.

\section{Design of QuaRC}
In this section, we first introduce the overview of our method---QuaRC. Subsequently, we describe two key phases of QuaRC: the coreset selection phase and the quantized model training phase.

\subsection{Overview of QuaRC}
As shown in Fig. \ref{fig1}, QuaRC is a framework that utilizes coresets for QAT, which mainly consists of two phases:

\textcircled{1} \textit{Coreset Selection Phase:}  In this phase, we aim to select a small yet representative subset of the dataset that can effectively capture the quantization errors of the model. We propose the \textbf{Relative Entropy Score (RES)} as the criterion for selecting the coreset. Specifically, we input each sample into both the full-precision model and the quantized model, and calculate the relative entropy between their output distributions. Samples with higher relative entropy are considered more capable of capturing the quantization errors. Therefore, we sort all samples based on their relative entropy scores and select the top $S\%$ samples to form the coreset. This coreset will be used for training the quantized model in the next phase.

\textcircled{2} \textit{Quantized Model Training Phase:} In this phase, we train the quantized model using the selected coreset. To address the accumulation of quantization errors in the intermediate layers of the quantized model, we introduce a \textbf{Cascaded Layer Correction (CLC)} strategy. This strategy aligns the output distributions of the intermediate layers of the quantized model with those of the full-precision model by minimizing the Kullback-Leibler (KL) divergence between them. Specifically, we add an additional loss term to the training objective, which encourages the quantized model to learn representations that are closer to those of the full-precision model at each intermediate layer. This helps to reduce the quantization errors and improve the overall accuracy of the quantized model.

We will describe these two phases in detail in Section \ref{res} and Section \ref{clc}.

\begin{figure}[t]
   \centering
   \begin{subfigure}[b]{0.23\textwidth}
       \includegraphics[width=\textwidth]{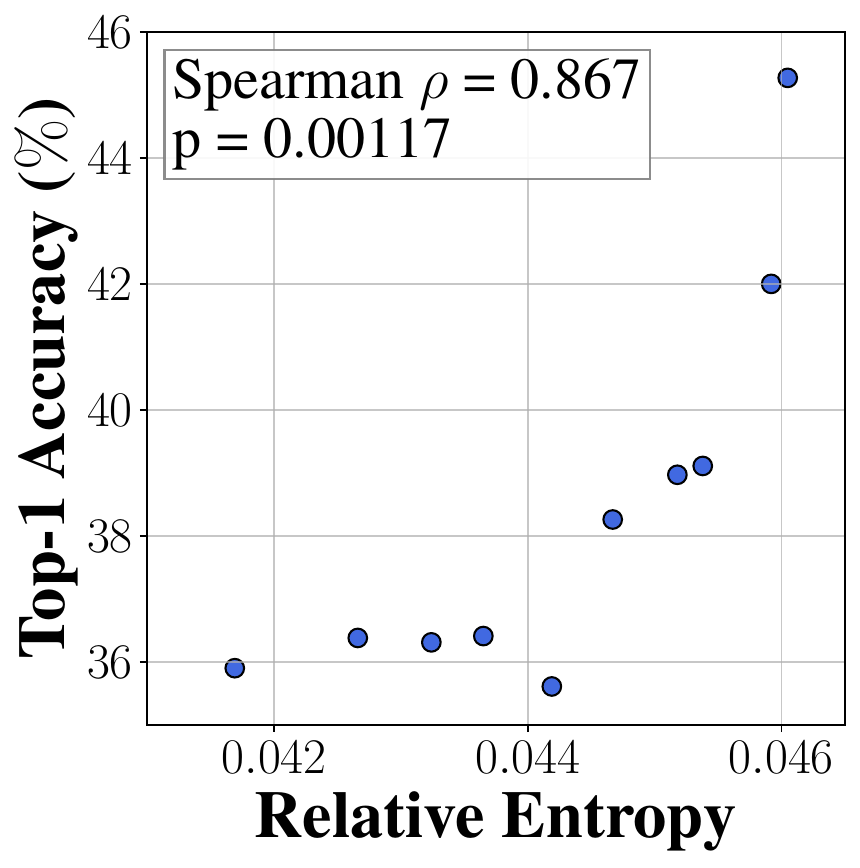}
       \caption{Correlation analysis.}
       \label{fig1:subfig1}
   \end{subfigure}
   \begin{subfigure}[b]{0.23\textwidth}
       \includegraphics[width=\textwidth]{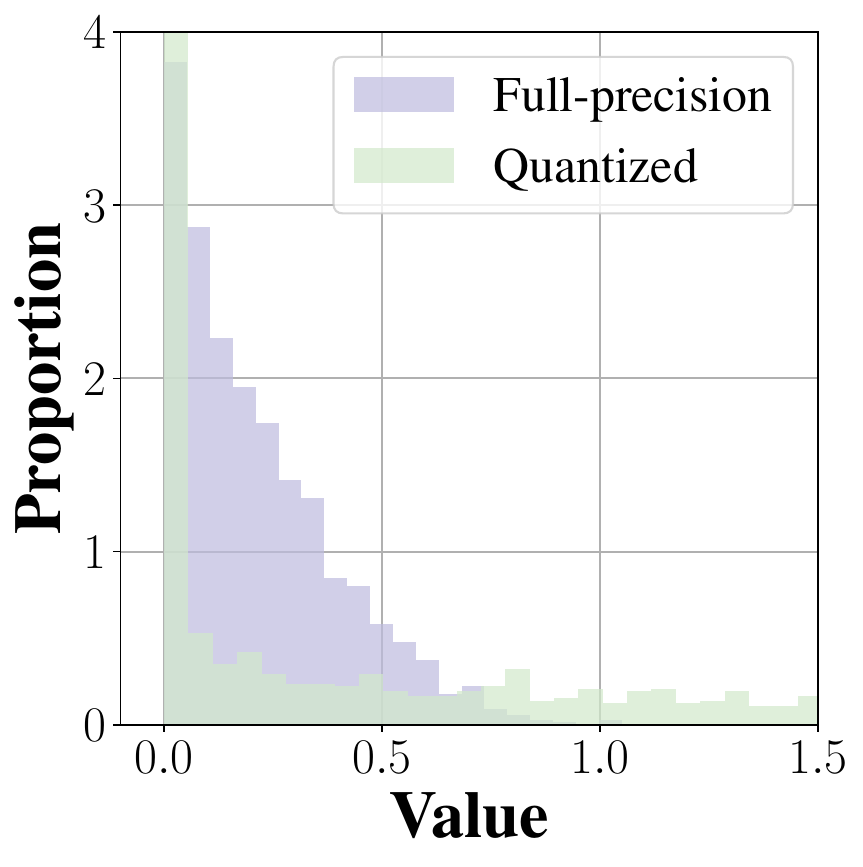}
       \caption{Output distribution analysis.}
       \label{fig1:subfig2}
   \end{subfigure}
    \caption{(a) We select coresets with different mean values of relative entropy for training.
(b) We compare the differences in intermediate layer outputs between the quantized model and the full-precision model. These observations are from training quantized MobileNetV2 on 1\% of CIFAR-100 dataset.}
    \label{fig2}
\end{figure}

\subsection{Relative Entropy Coreset Selection Phase}
\label{res}
\noindent\textbf{Quantization Errors Analysis.} Although existing coreset selection methods have improved the efficiency of QAT, selecting a small-scale coreset for training often leads to a significant drop in the performance of the quantized model. This is because these methods do not directly select the coreset from the perspective of reducing model quantization errors, making it difficult to eliminate such errors in subsequent training. To reduce the model's quantization errors, it is essential to ensure that the selected coreset can reflect those errors. 

Quantization errors manifest as the discrepancy between the outputs of the full-precision model \( p_{\mathbf{F}}(w^r, x^r) \) and its quantized counterpart \( p_{\mathbf{Q}}(w^q, x^q) \) for the same input sample \( (x, y) \). Minimizing this discrepancy is the core objective of QAT (Eq.(\ref{eq:kd})). We can use relative entropy to measure the difference between these two outputs. When the relative entropy is small, it indicates that the output of the quantized model \( p_{\mathbf{Q}}(w^q,x^q) \) is very similar to the output of the full-precision model \( p_{\mathbf{F}}(w^r,x^r) \). This means that the sample hardly captures the errors in the quantized model. In this case, training with such samples results in a small loss value that is close to convergence, leading to very limited updates to the model parameters. Consequently, it is difficult to effectively reduce the quantization errors. Conversely, when the relative entropy is large, it indicates a significant difference between \( p_{\mathbf{Q}}(w^q,x^q) \) and \( p_{\mathbf{F}}(w^r,x^r) \). Training with such samples results in a larger loss value, allowing for substantial updates to the model parameters, thereby effectively reducing the quantization errors.

Furthermore, we select 10 coresets with different relative entropies from the full dataset (the relative entropy of a coreset is defined as the average relative entropy of the samples it contains), with each coreset containing 1\% of the samples from the full dataset. We then train 2-bit quantized MobileNetV2 models using these coresets and obtain the Top-1 accuracy of the models, as shown in Fig.~\ref{fig1:subfig1}. The Spearman correlation coefficient between relative entropy and Top-1 accuracy is 0.867, with \( p = 0.00117 < 0.05 \), indicating a significant and strong positive correlation between these two variables. This experiment further demonstrates that using relative entropy as a metric for coreset selection can effectively reflect the model's quantization errors, helping to reduce these errors during training and thereby improving the model's performance.

\noindent\textbf{Relative Entropy Score.} Based on the above insights, we propose the Relative Entropy as the metric for coreset selection. Before training the quantized model, we first traverse every sample in the dataset, and input each sample into both the quantized model and the full-precision model to obtain two outputs \( p_{\mathbf{Q}}^{(m)}(w^q,x^q) \) and \( p_{\mathbf{F}}^{(m)}(w^r,x^r) \). Then, calculate the Relative Entropy Score (RES) according to the following formula:
\begin{equation}
\label{eq:res}
    d_{\text{RES}} = \sum_{m}^M p_{\mathbf{Q}}^{(m)}(w^q,x^q) \log \left(\frac{p_{\mathbf{Q}}^{(m)}(w^q,x^q)}{p_{\mathbf{F}}^{(m)}(w^r,x^r)}\right)
\end{equation}
where \( m \) denotes the classes.

\noindent\textbf{Final Selection Metric.} The metrics $d_{\text{EVS}} = \| p(w^q_t,x) - y \|_2$  and $d_{\text{DS}} = \| p(w^q_t,x) - p_{\mathbf{T}}(w^r_t,x) \|_2$ select the coreset from the perspective of gradients. Using the cosine annealing weight coefficient \( \alpha(t) = \cos \left(\frac{t}{2T} \pi\right) \), based on the current training epoch \( t \) and the total number of training epochs \( T \), to balance them can select the coreset that contributes the most to model training, thereby maximizing model performance \cite{huangrobust}. Unlike these two metrics, $d_{\text{RES}}$ selects the samples that best reflect the model's quantization errors as the coreset for training, maximizing model performance from the perspective of reducing quantization errors. We select the coreset from both the gradient and quantization error perspectives, combining the above three metrics as the final coreset selection metric:
\begin{equation}
\label{eq:ds}
    d_{\text{S}}(t) = \alpha(t)d_{\text{EVS}}(t) + (1-\alpha(t))d_{\text{DS}}(t) + d_{\text{RES}}(t)
\end{equation}

We sort the samples in the entire dataset based on $d_{\text{S}}(t)$ and select the top $S\%$ of samples for subsequent training.

\subsection{Cascaded Layer Correction Training Phase}
\label{clc}

\noindent\textbf{Intermediate Layer Errors Analysis.} Training with KD on the full dataset can alleviate the accuracy drop of the quantized model. However, when training with a small coreset, it can lead to a significant decrease in the accuracy of the quantized model. To explore the reasons, we evaluate whether the model's learning and representation abilities for samples changed during the quantization process. Specifically, we input the same samples into the full-precision model and the 3-bit quantized model trained using the KD strategy. Throughout the forward propagation, we capture and record outputs from the intermediate layers, with the output distributions depicted in Fig.~\ref{fig1:subfig2}. We find that the inter-layer output distributions of the quantized model and the full-precision model differ significantly. Specifically, the KL divergence between the inter-layer outputs of the 3-bit quantized model and the full-precision model is 9.0e-4, which indicates that the errors in the intermediate layers of the quantized model are difficult to eliminate.

\noindent\textbf{Cascaded Layer Correction.} In common KD training strategies (i.e., optimizing only the output distribution of the last layer), the entire dataset is used for training. Extensive data training can reduce the quantization errors in the model's intermediate layers. However, training with a small-scale coreset is not sufficient to eliminate errors in the intermediate layers. Therefore,  it is necessary to design strategies to optimize the output distribution of the intermediate layers, ensuring that the quantized model maintains a representation ability close to that of the full-precision model at intermediate layers \cite{li2023hard}.

\begin{algorithm}[tb]
\caption{The overall pipeline of QuaRC}
\label{alg:algorithm}
\begin{algorithmic}
\STATE {\bfseries Input:} Training dataset $D =\{(x_i,y_i)\}_{i=1}^n$, Initial coreset $D_{\text{S}}(t)$,  Coreset data fraction $S$, Total training epochs $T$, Selection interval $R$, FP model's weights $\bf W^r$
\STATE {\bfseries Output:} Quantized model's weights $\bf W^q$
\STATE Initialize quantized weights $\bf W^q$ following Eq.(\ref{eq:quantization})
\FOR{$t \in [0, ..., T-1]$}
\STATE \# Stage 1: Coreset selection
\IF{$ t\%R == 0$} 
\FOR{$(x_i,y_i) \in D$}
\STATE Calculate $d_{\text{EVS}}(x_i,t)$ and $d_{\text{DS}}(x_i,t)$
\STATE Calculate $d_{\text{RES}}(x_i,t)$ by Eq.(\ref{eq:res})
\STATE Combine the scores to obtain $d_{\text{S}}(x_i,t)$ by Eq.(\ref{eq:ds})
\ENDFOR
\STATE Sort $d_{\text{S}}(x_i,t)$, choose top $S\%$ samples as $D_{\text{S}}(t)$
\ELSE
\STATE $D_{\text{S}}(t) \gets D_{\text{S}}(t-1)$
\ENDIF
\STATE \# Stage 2: Model training
\STATE Calculate $\mathcal{L}_{\text{KD}}$ by Eq.(\ref{eq:kd}) and $\mathcal{L}_{\text{CLC}}$ by Eq.(\ref{eq:CLC})
\STATE Combine the losses by Eq.(\ref{eq:total}) and train $\bf W^q$ on $D_{\text{S}}(t)$ following Eq.(\ref{eq:total})
\ENDFOR
\end{algorithmic}
\end{algorithm}

To minimize the difference between the intermediate layer outputs of the full-precision and quantized models, we propose a Cascaded Layer Correction (CLC) training strategy.  Specifically, we use a 
full-precision correction model to adjust the output distribution of selected intermediate layers of the quantized model. This adjustment is guided by the following formula:
\begin{equation}
\label{eq:distance}
% \small
    \min\sum distance(O, \hat{O})
\end{equation}
where \( O \) represents the output distribution of selected intermediate layers from the full-precision model, and \( \hat{O} \) denotes the output distribution of the corresponding selected intermediate layers from the quantized model. We utilize information distance to measure the distance between output distributions, and reducing the difference in the output of the intermediate layer between the full-precision model and the quantized model is equivalent to optimizing the following loss function in training:
\begin{equation}
\label{eq:CLC}
\mathcal{L}_{CLC}=\sum_{c}^{C}{p_{\mathbf{Q}}^{i}(w^q,x^q)\log{\left(\frac{p_{\mathbf{Q}}^{i}(w^q,x^q)}{p_{\mathbf{F}}^{i}(w^r,x^r)}\right)}}
\end{equation}
where \( c \) represents the intermediate layer to be optimized, \( p_{\mathbf{Q}}^{i}(w^q,x^q) \) and \( p_{\mathbf{F}}^{i}(w^r,x^r) \) denote outputs of the
intermediate layer.

\noindent\textbf{Total Training Loss.} We use both KD and CLC for QAT, and the final loss function is as follows:
\begin{equation}
\label{eq:total}
    \mathcal{L}_{\text{TOTAL}} = \mathcal{L}_{\text{KD}} + \beta\mathcal{L}_{\text{CLC}}
\end{equation}

To help better understand the training process, we provide the brief pseudo-code of it in Algorithm \ref{alg:algorithm}. Following ACS~\cite{huangrobust}, we perform coreset selection every $R$ epoch, where $R$ is predetermined prior to the training.

\section{Theoretical Analysis\label{3.4}}
In this section, we first conduct a theoretical analysis of the computational complexity of QuaRC and demonstrate that it can improve the training efficiency of QAT. Subsequently, we provide a convergence analysis for QAT using coresets.

\subsection{Complexity Analysis}
We theoretically prove that QuaRC can improve the efficiency of QAT. In traditional QAT, the model performs one forward pass and one backward pass on the entire dataset in each training epoch. Assuming the dataset contains \( N \) samples, the number of training epochs is \( T \), the complexity of the forward pass is \( O(F) \), and the complexity of the backward pass is \( O(B) \), the total time complexity of traditional QAT is \(O(T \cdot N \cdot (F + B))\).

QuaRC selects a small subset of samples for training using a coreset. Assuming the selected sample ratio is \( S \) (e.g., \( S = 0.01 \) represents selecting 1\% of the samples), the complexity of the forward and backward passes per epoch is reduced to \( O(S \cdot N \cdot (F + B)) \). Therefore, the complexity of the coreset-based training is \( O(T \cdot S \cdot N \cdot (F + B))\). RES requires two additional forward passes to compute the relative entropy. For each sample in the dataset, these two forward passes are performed. 
The computational complexity of calculating the relative entropy is negligible compared to forward propagation. Assume that coreset selection is performed every 
\( R \) rounds. Thus, the computational complexity of RES is \( O(2 \cdot F \cdot N \cdot T / R )
\). The introduction of CLC mainly involves adding an additional loss term to optimize the output distribution of intermediate layers. The computational complexity of CLC is negligible and already included in the complexity of the forward pass. By summing up the above complexities, the total time complexity of QuaRC is \( O(T \cdot S \cdot N \cdot (F + B)) + O(2 \cdot F \cdot N  \cdot T / R)\). Since \( O(B) >\)  \( O(F) \) \cite{PyTorch}, when \( S \)  is sufficiently small  (e.g., \( S <\) 0.1), the following equation holds:

\begin{equation}
\label{scs}
% \small
O(T \cdot N \cdot (F + B)) > > O(T \cdot S \cdot N \cdot (F + B)) + O(2 \cdot F \cdot N  \cdot T / R)
\end{equation}

Through the above theoretical analysis, it can be proven that using our method can improve the efficiency of QAT.

\subsection{Convergence Analysis}
We analyze the convergence of QAT with coresets. For the coreset $D_{\text{S}}$ and the loss function $\mathcal{L}(w^q; D_{\text{S}})$, we assume that:

\begin{enumerate}
    \item \textit{Assumption 1:\label{assumption1}} The loss function $\mathcal{L}(w^q; D_{\text{S}})$ is Lipschitz-smooth which implies that there exists a constant $L > 0$ such that for all $w^q, w^{q'} \in \mathbb{R}^d$ and any sample $d \in D_{\text{S}}$,
    \begin{equation}
    \label{eq:proof1}
    \begin{aligned}
    \| \nabla \mathcal{L}(w^q; d) - \nabla \mathcal{L}(w^{q'}; d) \| & \leq L \| w^q - w^{q'} \|
    \end{aligned}
    \end{equation}
    \item \textit{Assumption 2:\label{assumption2}} The gradient $\nabla \mathcal{L}(w^q; d)$ is bounded, i.e., there exists a constant $G$ such that for all $w^q \in \mathbb{R}^d$ and $d \in D_{\text{S}}$ we have
    \begin{equation}
    \label{eq:proof2}
    \begin{aligned}
    \| \nabla \mathcal{L}(w^q; d) \| & \leq G
    \end{aligned}
    \end{equation}
    \item  \textit{Assumption 3:\label{assumption3}} The learning rate $\eta_t$ satisfies
\begin{equation}
\label{eq:proof3}
    \sum_{t=1}^{\infty} \eta_t = \infty \quad \text{and} \quad \sum_{t=1}^{\infty} \eta_t^2 < \infty
\end{equation}
\end{enumerate}

Stochastic gradient descent (SGD) is employed for optimization. To simplify the proof, the weights can be updated according to the following equation:
\begin{equation}
\label{eq:proof4}
w^q_{t+1} = w^q_t - \eta_t g_t
\end{equation}
where $g_t = \nabla \mathcal{L}(w^q_t; d)$ and $d \in D_{\text{S}}$.

\begin{lemma}\label{lemma1}
For any iteration $t$ and $d \in D_{\text{S}}$, the loss difference satisfies:
\begin{equation}
\label{eq:loss_change}
\mathcal{L}(w^q_{t+1}; d) - \mathcal{L}(w^q_t; d) \leq -\eta_t \left(1 - \frac{L\eta_t}{2}\right) \|g_t\|^2
\end{equation}
\end{lemma}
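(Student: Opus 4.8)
The plan is to apply the Lipschitz-smoothness condition from Assumption~\ref{assumption1} directly, exploiting the standard consequence that smoothness of the gradient gives an upper bound on the function value via a second-order expansion. Concretely, I would first establish the \emph{descent lemma}: for any $L$-smooth function and any two points $w^q_t$ and $w^q_{t+1}$, integrating the gradient along the segment joining them and applying the Cauchy--Schwarz inequality together with Eq.~(\ref{eq:proof1}) yields
\begin{equation}
\label{eq:descent_plan}
\mathcal{L}(w^q_{t+1}; d) \leq \mathcal{L}(w^q_t; d) + \langle \nabla \mathcal{L}(w^q_t; d), w^q_{t+1} - w^q_t \rangle + \frac{L}{2}\|w^q_{t+1} - w^q_t\|^2.
\end{equation}

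Next I would substitute the SGD update rule Eq.~(\ref{eq:proof4}), namely $w^q_{t+1} - w^q_t = -\eta_t g_t$ with $g_t = \nabla \mathcal{L}(w^q_t; d)$. The inner-product term becomes $\langle g_t, -\eta_t g_t\rangle = -\eta_t \|g_t\|^2$, and the quadratic term becomes $\frac{L}{2}\eta_t^2\|g_t\|^2$. Collecting these two contributions and moving $\mathcal{L}(w^q_t; d)$ to the left-hand side gives exactly
\begin{equation}
\mathcal{L}(w^q_{t+1}; d) - \mathcal{L}(w^q_t; d) \leq -\eta_t\|g_t\|^2 + \frac{L\eta_t^2}{2}\|g_t\|^2 = -\eta_t\left(1 - \frac{L\eta_t}{2}\right)\|g_t\|^2,
\end{equation}
which is the claimed inequality Eq.~(\ref{eq:loss_change}). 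Note that Assumptions~\ref{assumption2} and \ref{assumption3} are not needed for this particular lemma; they will presumably enter the subsequent global convergence argument, so here I would only invoke Assumption~\ref{assumption1}.

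The only genuine obstacle is justifying the descent inequality Eq.~(\ref{eq:descent_plan}) itself, since everything afterward is a mechanical substitution. The cleanest derivation writes $\mathcal{L}(w^q_{t+1}; d) - \mathcal{L}(w^q_t; d) = \int_0^1 \langle \nabla \mathcal{L}(w^q_t + \tau(w^q_{t+1} - w^q_t); d),\, w^q_{t+1} - w^q_t \rangle \, d\tau$, then adds and subtracts $\langle \nabla \mathcal{L}(w^q_t; d), w^q_{t+1} - w^q_t\rangle$ and bounds the residual gradient difference using Eq.~(\ref{eq:proof1}) and Cauchy--Schwarz. I should flag one subtlety particular to this QAT setting: the loss is composed with the quantization map $q(\cdot)$ and the straight-through estimator, so $\mathcal{L}$ is not classically differentiable everywhere. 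I would therefore treat $\nabla \mathcal{L}$ as the STE surrogate gradient actually used in Eq.~(\ref{eq:process}) and assume smoothness holds for this surrogate, which is precisely what Assumption~\ref{assumption1} postulates; under that reading the derivation goes through verbatim and no additional care about non-differentiability is required.
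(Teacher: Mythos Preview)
Your proposal is correct and follows essentially the same route as the paper: establish the quadratic upper bound from $L$-smoothness (the descent lemma) and then substitute the SGD step $w^q_{t+1}-w^q_t=-\eta_t g_t$. The only cosmetic difference is that the paper writes the bound via a second-order Taylor expansion with a Hessian at an intermediate point and then invokes Assumption~\ref{assumption1} to cap the quadratic term by $\frac{L}{2}\|w^q_{t+1}-w^q_t\|^2$, whereas you derive the same inequality through the integral representation plus Cauchy--Schwarz; your derivation is arguably cleaner since it avoids assuming twice-differentiability, and your remarks that Assumptions~\ref{assumption2}--\ref{assumption3} are unused here and that smoothness is being imposed on the STE surrogate are accurate observations the paper leaves implicit.
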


\begin{proof}
Expand $\mathcal{L}(w^q_{t+1}; d)$ using Taylor series around $w^q_t$:
\begin{equation}
\begin{aligned}
\mathcal{L}(w^q_{t+1}; d) &= \mathcal{L}(w^q_t; d) + \nabla \mathcal{L}(w^q_t; d)^T (w^q_{t+1} - w^q_t) \\
&\quad + \frac{1}{2}(w^q_{t+1} - w^q_t)^T \nabla^2 \mathcal{L}(c)(w^q_{t+1} - w^q_t)
\end{aligned}
\end{equation}
where \( c \) lies between \( w^q_t \) and \( w^q_{t+1} \). Substitute $w^q_{t+1} - w^q_t = -\eta_t g_t$ and use \textit{Assumption~\ref{assumption1}}:
\begin{equation}
\begin{aligned}
\mathcal{L}(w^q_{t+1}; d) &\leq \mathcal{L}(w^q_t; d) - \eta_t \|g_t\|^2 + \frac{1}{2}\eta_t^2 L \|g_t\|^2 \\
&= \mathcal{L}(w^q_t; d) - \eta_t \left(1 - \frac{L\eta_t}{2}\right) \|g_t\|^2
\end{aligned}
\end{equation}
\end{proof}

\begin{lemma}\label{lemma2}
The sequence $\{\mathbb{E}[\mathcal{L}(w^q_t; d)]\}$ is non-increasing:
\begin{equation}
\label{eq:monotonicity}
\mathbb{E}[\mathcal{L}(w^q_{t+1}; d)] \leq \mathbb{E}[\mathcal{L}(w^q_t; d)]
\end{equation}
\end{lemma}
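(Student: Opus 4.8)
The plan is to deduce Lemma~\ref{lemma2} directly from Lemma~\ref{lemma1} by taking expectations of the per-step loss descent inequality and then arguing that the right-hand side is non-positive under the stated assumptions. First I would take the conditional expectation of both sides of Eq.~(\ref{eq:loss_change}) with respect to the randomness in the SGD sample draw at iteration $t$ (i.e.\ the random $d \in D_{\text{S}}$ selected for the update), conditioned on $w^q_t$. Since Lemma~\ref{lemma1} holds pointwise for every $d \in D_{\text{S}}$, it holds in expectation, giving
\begin{equation}
\mathbb{E}[\mathcal{L}(w^q_{t+1}; d)] \leq \mathbb{E}[\mathcal{L}(w^q_t; d)] - \eta_t \left(1 - \frac{L\eta_t}{2}\right) \mathbb{E}\|g_t\|^2 .
\end{equation}

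The key step is then to show the subtracted term is non-negative, so that dropping it preserves the inequality and yields the claimed monotonicity. The factor $\eta_t \geq 0$ and $\mathbb{E}\|g_t\|^2 \geq 0$ are immediate, so everything reduces to verifying $1 - \tfrac{L\eta_t}{2} \geq 0$, equivalently $\eta_t \leq 2/L$. I would obtain this from \emph{Assumption~\ref{assumption3}}: since $\sum_t \eta_t^2 < \infty$, the learning rates satisfy $\eta_t \to 0$, so there exists an index $t_0$ beyond which $\eta_t \leq 2/L$ holds. For $t \geq t_0$ the coefficient $\eta_t(1 - L\eta_t/2)$ is non-negative, the subtracted quantity is non-negative, and therefore $\mathbb{E}[\mathcal{L}(w^q_{t+1}; d)] \leq \mathbb{E}[\mathcal{L}(w^q_t; d)]$, which is exactly Eq.~(\ref{eq:monotonicity}).

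The main obstacle I anticipate is the technical gap that the monotonicity only holds for sufficiently large $t$ (once $\eta_t \leq 2/L$), not for every iteration as the bare statement of the lemma might suggest; a fully rigorous write-up should either restrict to $t \geq t_0$ or invoke a standard convention that the step size schedule is chosen small enough (e.g.\ $\eta_1 \leq 2/L$) from the outset, which together with a monotone-decreasing schedule guarantees $\eta_t \leq 2/L$ for all $t$. A secondary subtlety is the use of the tower property: the inequality above is conditional on $w^q_t$, and one takes a further (outer) expectation over the history to arrive at the unconditional statement in Eq.~(\ref{eq:monotonicity}); because the conditional inequality is uniform in $w^q_t$, this outer expectation is routine and preserves the direction of the inequality. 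Beyond these bookkeeping points, the argument is a direct consequence of Lemma~\ref{lemma1} and requires no further estimates.
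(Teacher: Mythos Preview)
Your argument is correct and mirrors the paper's proof almost exactly: take expectations in the descent inequality of Lemma~\ref{lemma1}, then invoke \textit{Assumption~\ref{assumption3}} to conclude $\eta_t \leq 2/L$ for large $t$ so that the right-hand side is non-positive. Your discussion of the ``sufficiently large $t$'' caveat and the tower property is more explicit than the paper's (which also only asserts $\eta_t \in [0,2/L]$ ``for large $t$''), but the substance is identical.
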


\begin{proof}
Take expectations in Lemma~\ref{lemma1} and use \textit{Assumption~\ref{assumption2}}:
\begin{equation}
\mathbb{E}[\mathcal{L}(w^q_{t+1}; d) - \mathcal{L}(w^q_t; d)] \leq -\eta_t \left(1 - \frac{L\eta_t}{2}\right) \mathbb{E}[\|g_t\|^2]
\end{equation}
From \textit{Assumption~\ref{assumption3}}, $\eta_t \in [0, 2/L]$ for large $t$, thus:
\begin{equation}
\mathbb{E}[\mathcal{L}(w^q_{t+1}; d)] \leq \mathbb{E}[\mathcal{L}(w^q_t; d)]
\end{equation}
\end{proof}

\begin{theorem}\label{thm1}
Under Assumptions~\ref{assumption1}-\ref{assumption3}, the sequence $\{\mathbb{E}[\mathcal{L}(w^q_t; d)]\}$ converges to a limit $L^* \geq 0$.
\end{theorem}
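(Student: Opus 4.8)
The plan is to establish convergence of the sequence $\{\mathbb{E}[\mathcal{L}(w^q_t; d)]\}$ by invoking the Monotone Convergence Theorem for real sequences, which requires only two ingredients: that the sequence be monotone and that it be bounded. Lemma~\ref{lemma2} already supplies monotonicity, having shown that $\{\mathbb{E}[\mathcal{L}(w^q_t; d)]\}$ is non-increasing for large $t$. Thus the entire burden of the proof reduces to establishing a lower bound on the sequence, after which the limit $L^*$ exists automatically.

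First I would observe that the loss function $\mathcal{L}(w^q; d)$ used in QAT—whether the KD cross-entropy of Eq.(\ref{eq:kd}), the KL-based CLC term of Eq.(\ref{eq:CLC}), or their weighted combination in Eq.(\ref{eq:total})—is non-negative by construction, since cross-entropy and KL divergence are both bounded below by zero. Taking expectations preserves this bound, so $\mathbb{E}[\mathcal{L}(w^q_t; d)] \geq 0$ for every $t$. This immediately furnishes the lower bound $L^* \geq 0$ asserted in the theorem statement. Combining the monotone non-increasing property from Lemma~\ref{lemma2} with this lower bound of zero, the Monotone Convergence Theorem guarantees that the sequence converges to some limit $L^* \geq 0$.

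As a complementary route that also extracts information about the gradients, I would sum the per-step decrease from Lemma~\ref{lemma1} across iterations. Telescoping the inequality $\mathbb{E}[\mathcal{L}(w^q_{t+1}; d)] - \mathbb{E}[\mathcal{L}(w^q_t; d)] \leq -\eta_t(1 - L\eta_t/2)\mathbb{E}[\|g_t\|^2]$ from $t=1$ to $T$ and using non-negativity of the loss on the left yields $\sum_{t} \eta_t(1 - L\eta_t/2)\mathbb{E}[\|g_t\|^2] \leq \mathbb{E}[\mathcal{L}(w^q_1; d)] < \infty$. Here Assumption~\ref{assumption3} (the $\sum \eta_t^2 < \infty$ condition) ensures $\eta_t \to 0$, so the factor $(1 - L\eta_t/2)$ stays bounded away from zero for large $t$, making the series of weighted squared gradients summable. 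This confirms both that the partial sums of loss decrements form a bounded (hence convergent) monotone sequence and that the gradients vanish in a Cesàro sense.

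The main obstacle is not analytical but one of rigor in the QAT setting: the Taylor expansion underlying Lemma~\ref{lemma1} presumes twice-differentiability, yet the fake-quantization map of Eq.(\ref{eq:quantization}) involves the non-differentiable rounding and clamping operations, with gradients supplied only through the straight-through estimator of Eq.(\ref{eq:process}). I would address this by noting that the analysis operates on the \emph{surrogate} loss landscape seen by SGD—where the STE renders $g_t$ a well-defined stochastic gradient and the Lipschitz-smoothness of Assumption~\ref{assumption1} is posited directly on this effective objective—so the Taylor argument applies to the smooth surrogate rather than the true discontinuous map. With that caveat made explicit, the convergence follows cleanly from monotonicity plus the non-negative lower bound, and no further calculation is required.
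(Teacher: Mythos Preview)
Your proposal is correct and follows essentially the same argument as the paper: invoke Lemma~\ref{lemma2} for the non-increasing property, observe that the loss is non-negative so the sequence is bounded below by zero, and conclude via the Monotone Convergence Theorem that the limit $L^*\geq 0$ exists. Your additional paragraphs on telescoping the decrements and on the STE/differentiability caveat go beyond what the paper records, but they do not alter the route.
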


\begin{proof}
From Lemma~\ref{lemma2}, the sequence $\{\mathbb{E}[\mathcal{L}(w^q_t; d)]\}$ is:
\begin{itemize}
\item Non-increasing: $\mathbb{E}[\mathcal{L}(w^q_{t+1}; d)] \leq \mathbb{E}[\mathcal{L}(w^q_t; d)]$
\item Lower bounded by 0: $\mathcal{L}(w^q_t; d) \geq 0$ (by definition of loss functions)
\end{itemize}
By the Monotone Convergence Theorem, there exists $L^* \geq 0$ such that:
\begin{equation}
\lim_{t \to \infty} \mathbb{E}[\mathcal{L}(w^q_t; d)] = L^*
\end{equation}
\end{proof}

Therefore,  the sequence $\{\mathbb{E}[\mathcal{L}(w^q_t; d)]\}_{t=1}^\infty$ is non-increasing and lower-bounded, so it must converge to some limit $L^*$,  signifying the convergence of QAT with coresets.

\section{Experiments}
\subsection{Experimental Setup}

\noindent\textbf{Datasets and Networks.} The datasets used in our experiments are CIFAR-100 \cite{krizhevsky2009learning} and ImageNet-1K \cite{deng2009imagenet}. We apply data augmentation techniques, \textit{RandomResizedCrop} and \textit{RandomHorizontalFlip}, provided by PyTorch \cite{paszke2019pytorch}. Consistent with ACS \cite{huangrobust}, we evaluate our method on two widely-used neural networks: MobileNetV2 \cite{howard2017mobilenets} and ResNet-18 \cite{he2016deep}.

\noindent\textbf{Baselines.} Our work focuses on addressing performance degradation in quantized models using small-scale coresets for QAT. Therefore, we selected baselines related to coreset selection, rather than QAT methods that modify the training strategy without addressing coreset selection. To ensure a fair comparison, we have selected multiple baselines for comparison, which can be categorized as follows:
\begin{itemize}
    \item Random Sampling: Randomly select a coreset of specified size from each class for QAT.
    \item Forgetting\cite{toneva2018an}: Select samples based on early training forgetting statistics, focusing on those triggering more forgetting events and being more challenging to learn, as the coreset for QAT.
    \item ContextualDiversity(CD)\cite{agarwal2020contextual}: Choose the centroids of sample clusters as the coreset for QAT.
    \item Moderate\cite{xia2023moderate}: Select samples with scores close to the median score as the coreset for QAT.
    \item ACS\cite{huangrobust}: Adaptively select samples with the greatest impact on gradients as the coreset for QAT at different intervals.
    % \item Add additional items here if needed
\end{itemize}

\begin{table*}[t]
\centering
\caption{Comparison of Top-1 
  and Top-5 accuracy of different methods on QAT of quantized
MobileNetV2 on CIFAR-100 with different subset fractions. The bitwidth for quantized MobileNetV2 is
2/32, 3/32, and 4/32 for weights/activations.}
\label{tab1}
\begin{tabular}{c|c|cc|cc|cc|cc|cc}
\toprule    
\multirow{2}[4]{*}{Method} & \multirow{2}[4]{*}{Bit Width} & \multicolumn{2}{c|}{$1\%$} & \multicolumn{2}{c|}{$3\%$} & \multicolumn{2}{c|}{$5\%$} & \multicolumn{2}{c|}{$7\%$} & \multicolumn{2}{c}{$10\%$} \\
\cmidrule{3-12}
 &   & Top-1 & Top-5 & Top-1 & Top-5 & Top-1 & Top-5 & Top-1 & Top-5 & Top-1 & Top-5\\
    \midrule
    Random & \multirow{6}[1]{*}{2w32a} &40.57  &71.74  &53.22  &80.43  &56.21  &83.45  &59.52  &85.73  &60.81  &86.42\\
    Forgetting\cite{toneva2018an} & &40.73  &70.69  &51.90  &80.04  &55.78  &83.27  &58.26  &84.36  &60.39  &86.01\\
    CD\cite{agarwal2020contextual}  & &38.61  &68.30  &51.53  &79.36  &55.13  &82.55     &57.86  &83.63  &58.53  &85.05\\
    Moderate\cite{xia2023moderate}  & &35.14  &65.47  &48.99  &77.86  &51.99  &80.91  &55.17  &83.31  &57.51  &84.02 \\
    ACS\cite{huangrobust}  & &46.84  &78.09  &57.07  &84.64  &59.59  &85.95  &61.59  &87.54  &63.32  &87.71\\
    QuaRC (Ours)  & &\cellcolor{gray!20}\textbf{56.36}  &\cellcolor{gray!20}\textbf{85.08}  &\cellcolor{gray!20}\textbf{61.97}  &\cellcolor{gray!20}\textbf{87.66}  &\cellcolor{gray!20}\textbf{63.30}  &\cellcolor{gray!20}\textbf{88.01}   &\cellcolor{gray!20}\textbf{64.19}  &\cellcolor{gray!20}\textbf{88.62}  &\cellcolor{gray!20}\textbf{65.64}  &\cellcolor{gray!20}\textbf{88.98}\\
    \midrule
    Random & \multirow{6}[1]{*}{3w32a} &62.93  &86.89  &65.59  &88.86  &67.58  &89.72  &68.15  &90.34  &69.08  &90.34\\
    Forgetting\cite{toneva2018an}  & &63.01  &86.45  &65.43  &89.44  &67.58   &89.41  &68.18  &89.85  &68.77  &90.86\\
    CD\cite{agarwal2020contextual}  & &62.10  &86.31  &66.16  &88.50  &67.24  &89.24   &68.52  &89.81  &68.77  &90.41\\
    Moderate\cite{xia2023moderate}  & &61.71  &86.19  &65.29  &88.26  &67.00  &89.41  &67.57  &89.97 &67.88  &90.07 \\
    ACS\cite{huangrobust}  & &64.98  &88.64  &67.83  &90.11  &68.48  &90.58   &68.94  &91.22  &69.13  &90.59\\
    QuaRC (Ours)  & &\cellcolor{gray!20}\textbf{68.69}  &\cellcolor{gray!20}\textbf{90.89}  &\cellcolor{gray!20}\textbf{69.41}  &\cellcolor{gray!20}\textbf{91.27}  &\cellcolor{gray!20}\textbf{70.11}  &\cellcolor{gray!20}\textbf{91.77}   &\cellcolor{gray!20}\textbf{70.23}  &\cellcolor{gray!20}\textbf{91.35}  &\cellcolor{gray!20}\textbf{70.99}  &\cellcolor{gray!20}\textbf{91.60}\\
    \midrule
    Random & \multirow{6}[1]{*}{4w32a} &69.21  &90.55  &70.17  &91.28  &70.68  &91.10  &70.94  &91.31  &71.28  &91.80\\
    Forgetting\cite{toneva2018an}  & &69.35  &90.38  &70.19  &91.12  &70.47  &91.35  &71.12  &91.51  &71.18  &91.84\\
    CD\cite{agarwal2020contextual}  & &69.03  &90.41  &70.35  &91.04  &70.53  &91.16   &71.05  &91.58  &71.30  &91.29\\
    Moderate\cite{xia2023moderate}  & &68.69  &90.11  &70.30  &90.74  &70.40  &91.15  &70.86  &91.06  &71.02  &91.34\\
    ACS\cite{huangrobust}  & &69.37  &90.68  &70.55  &91.30  &70.92  &91.50   &71.26  &91.53  &71.39  &91.83\\
    QuaRC (Ours)  & &\cellcolor{gray!20}\textbf{71.25}  &\cellcolor{gray!20}\textbf{91.39}  &\cellcolor{gray!20}\textbf{71.53}  &\cellcolor{gray!20}\textbf{91.64}  &\cellcolor{gray!20}\textbf{71.86}  &\cellcolor{gray!20}\textbf{91.73}  &\cellcolor{gray!20}\textbf{71.98}  &\cellcolor{gray!20}\textbf{91.72}  &\cellcolor{gray!20}\textbf{72.03}  &\cellcolor{gray!20}\textbf{91.94}\\
    \bottomrule
    \end{tabular}
\end{table*}

\noindent\textbf{Metrics.} We use Top-1 and Top-5 accuracy as evaluation metrics. For inference, we report the Top-1 and Top-5 accuracy of the full-precision models. Specifically, the full-precision MobileNetV2 achieves 72.56\% Top-1 and 91.93\% Top-5 accuracy on CIFAR-100, while the full-precision ResNet-18 achieves 69.76\% Top-1 and 89.07\% Top-5 accuracy on ImageNet-1K.

\noindent\textbf{Implementation Details.} Following ACS\cite{huangrobust},  we use the quantization method from LSQ+ \cite{bhalgat2020lsq+}. For MobileNetV2 on CIFAR-100, we train for 200 epochs with a learning rate of 0.01, weight decay of 5e-4, batch size of 256, and the SGD optimizer, setting $R=50$ and $\beta= 1e5$. The teacher model is MobileNetV2. For ResNet-18 on ImageNet-1K, we train for 120 epochs with a learning rate of 1.25e-3, no weight decay, batch size of 128, and the Adam optimizer, setting $R=10$ and $\beta= 3e3$. The teacher model is ResNet-101. All experiments are conducted on an NVIDIA RTX 4090 GPU.

\subsection{Main Results}
We present the  Top-1 and Top-5 accuracy for MobileNetV2 on CIFAR-100 and ResNet-18 on ImageNet-1K in Table \ref{tab1} and Table \ref{tab2}. For MobileNetV2 on CIFAR-100, we quantize the weights to 2/3/4-bit, with activations remaining in full-precision. Specifically, when the weights are quantized to 2-bit, random sampling achieves a Top-1 accuracy of only \(40.57\%\) on a \(1\%\) subset. Other methods such as Forgetting, CD, and Moderate perform even worse, with accuracies of \(40.73\%\), \(38.61\%\), and \(35.14\%\), respectively, all falling below the baseline level of random selection. This indicates that these methods fail to account for the characteristics of quantization and are not suitable for QAT. ACS, which employs a gradient-based dynamic balancing strategy, improves performance to \(46.84\%\), but there is still a significant gap compared to the full-precision model's accuracy of \(72.56\%\). In contrast, our proposed QuaRC achieves a Top-1 accuracy of \(56.36\%\) on the \(1\%\) subset, representing an absolute improvement of \(9.52\%\) over ACS. This fully demonstrates its effectiveness in reducing quantization errors. As the quantization bitwidth increases to 3-bit and 4-bit, QuaRC continues to maintain a significant advantage. Specifically, on the \(1\%\) subset, QuaRC achieves Top-1 accuracies of \(68.69\%\) and \(71.25\%\), respectively, which are significantly higher than ACS's \(64.98\%\) and \(69.37\%\). Moreover, QuaRC also shows excellent performance on higher subset proportions (such as \(5\%\) and \(10\%\)), with Top-1 accuracies reaching \(63.30\%\) and \(65.64\%\) under 2-bit setting, respectively. This further proves the robustness and effectiveness of the QuaRC under different data scales.

For ResNet-18 on ImageNet-1K, we quantize both weights and activations to 2/3/4-bit. On large-scale datasets, QuaRC also demonstrates significant advantages. Specifically, under the setting of 2-bit weight and activation quantization, random sampling achieves a Top-1 accuracy of only \(29.12\%\) on a \(1\%\) subset. Other methods such as Forgetting and CD, due to their insufficient sample representativeness, further degrade performance, with the Moderate method's accuracy dropping to \(22.29\%\). The ACS method improves performance to \(40.62\%\), but there is still a significant issue of error accumulation. QuaRC achieves a Top-1 accuracy of \(46.34\%\) on the \(1\%\) subset, representing an absolute improvement of \(5.72\%\) over ACS. This validates the synergistic effect of the quantization-aware coreset selection and cascaded layer correction strategies. As the subset proportion increases to \(10\%\), the method maintains a stable performance of \(62.32\%\) under 2-bit quantization. Moreover, under quantization settings of 3-bit and 4-bit, QuaRC achieves absolute improvements of \(2.82\%\) and \(1.47\%\) over ACS on the \(1\%\) subset, respectively. These results comprehensively demonstrate the advantages of QuaRC.

As the bitwidth and subset fraction increase, the performance enhancement of the quantized model using our method diminishes. This occurs because larger bitwidth and subset fraction bring the performance of the quantized model closer to that of the full-precision model, making additional improvements more challenging. In general, our method can improve the performance of quantized models under various bitwidths and subset fractions, and the experiments have validated the superiority of our method.

\begin{table*}[t]
  \centering
 \caption{Comparison of Top-1 
  and Top-5 accuracy of different methods on QAT of quantized
ResNet-18 on ImageNet-1K with different subset fractions. The bitwidth for quantized ResNet-18 is
2/2, 3/3, and 4/4 for weights/activations.}
\begin{tabular}{c|c|cc|cc|cc|cc|cc}
\toprule    
\multirow{2}[4]{*}{Method} & \multirow{2}[4]{*}{Bit Width} & \multicolumn{2}{c|}{$1\%$} & \multicolumn{2}{c|}{$3\%$} & \multicolumn{2}{c|}{$5\%$} & \multicolumn{2}{c|}{$7\%$} & \multicolumn{2}{c}{$10\%$} \\
\cmidrule{3-12}
 &   & Top-1 & Top-5 & Top-1 & Top-5 & Top-1 & Top-5 & Top-1 & Top-5 & Top-1 & Top-5\\
    \midrule
    Random & \multirow{6}[1]{*}{2w2a} &29.12  &53.95  &45.60  &71.09  &51.28  &75.89  &54.59  &78.58  &57.44  &80.76\\
    Forgetting\cite{toneva2018an}  & &30.13  &54.86  &45.45  &70.10  &50.42  &74.29  &52.92  &76.49  &56.15  &79.20\\
    CD\cite{agarwal2020contextual}  & &25.85  &50.20  &43.62  &69.45  &49.99  &75.52     &54.09  &78.62  &57.33  &81.13\\
    Moderate\cite{xia2023moderate}  & &22.29  &45.77  &40.68  &67.43  &47.39  &73.55  &51.74  &77.17  &55.56  &79.98 \\
    ACS\cite{huangrobust}  & &40.62  &65.60  &53.68  &77.22  &57.39  &80.51  &59.76  &82.41  &61.55  &83.44\\
    QuaRC (Ours)  & &\cellcolor{gray!20}\textbf{46.34}  &\cellcolor{gray!20}\textbf{71.08}  &\cellcolor{gray!20}\textbf{56.22}  &\cellcolor{gray!20}\textbf{79.46}  &\cellcolor{gray!20}\textbf{59.81}  &\cellcolor{gray!20}\textbf{82.50}   &\cellcolor{gray!20}\textbf{61.31}  &\cellcolor{gray!20}\textbf{83.50}  &\cellcolor{gray!20}\textbf{62.32}  &\cellcolor{gray!20}\textbf{84.11}\\
    \midrule
    Random & \multirow{6}[1]{*}{3w3a} &48.39  &73.88  &57.02  &80.64  &59.58  &82.35  &60.97  &83.65  &62.49  &84.60\\
    Forgetting\cite{toneva2018an}  & &48.13  &73.09  &55.77  &79.26  &58.60   &81.14  &60.16  &82.41  &61.42  &83.23\\
    CD\cite{agarwal2020contextual}  & &46.78  &72.75  &56.20  &80.43  &59.09  &82.68   &60.74  &83.76  &62.39  &84.72\\
    Moderate\cite{xia2023moderate}  & &45.10  &72.02  &54.68  &79.57  &57.77  &81.72  &59.46  &82.86  &61.05  &84.02 \\
    ACS\cite{huangrobust}  & &58.39  &81.30  &63.18  &84.79  &64.51  &85.82   &65.47  &86.63  &65.99  &\cellcolor{gray!20}\textbf{86.98}\\
    QuaRC (Ours)  & &\cellcolor{gray!20}\textbf{61.21}  &\cellcolor{gray!20}\textbf{83.21}  &\cellcolor{gray!20}\textbf{64.49}  &\cellcolor{gray!20}\textbf{85.59}  &\cellcolor{gray!20}\textbf{65.40}  &\cellcolor{gray!20}\textbf{86.22}   &\cellcolor{gray!20}\textbf{65.67}  &\cellcolor{gray!20}\textbf{86.66}  &\cellcolor{gray!20}\textbf{66.19}  &86.81\\
    \midrule
    Random & \multirow{6}[1]{*}{4w4a} &56.59  &80.45  &60.89  &83.53  &62.80  &84.73  &63.78  &85.38  &64.47  &86.03\\
    Forgetting\cite{toneva2018an}  & &55.97  &79.75  &59.98  &82.51  &62.06  &83.72  &63.08  &84.41  &64.03  &85.15\\
    CD\cite{agarwal2020contextual}  & &56.26  &80.32  &60.40  &83.38  &62.40  &84.88   &63.68  &85.81  &64.72  &86.25\\
    Moderate\cite{xia2023moderate}  & &54.87  &79.85  &59.43  &83.28  &61.33  &84.29  &62.58  &85.01  &63.60  &85.80\\
    ACS\cite{huangrobust}  & &63.83  &85.19  &66.41  &87.04  &67.29  &87.74   &67.76  &\cellcolor{gray!20}\textbf{87.92}  &68.23  &\cellcolor{gray!20}\textbf{88.34}\\
    QuaRC (Ours)  & &\cellcolor{gray!20}\textbf{65.70}  &\cellcolor{gray!20}\textbf{86.31}  &\cellcolor{gray!20}\textbf{67.21 } &\cellcolor{gray!20}\textbf{87.54 } &\cellcolor{gray!20}\textbf{67.42}  &\cellcolor{gray!20}\textbf{87.77}  &\cellcolor{gray!20}\textbf{67.81}  &87.77  &\cellcolor{gray!20}\textbf{68.30}  &88.09\\
    \bottomrule
    \end{tabular}
    \label{tab2}
\end{table*}

\begin{table}[t]
\centering
\caption{Evaluating RES and CLC of our method. We report the Top-1 and Top-5 accuracy of  quantized MobileNetV2 on CIFAR-100. The baseline is ACS.}
\begin{tabular}{c|cc|cc}
\toprule 
Method         & Bit width  &Fraction   & Top-1  & Top-5 \\ \toprule 
Baseline           & 2w32a    &$1\%$    &46.84      &78.09      \\
+RES           & 2w32a    &$1\%$    &48.77      &78.94       \\
+CLC           & 2w32a    &$1\%$    &55.25      &84.03       \\
\hspace{1em}+RES+CLC\hspace{1em}           &2w32a      &$1\%$    &\cellcolor{gray!20}\textbf{56.36}     &\cellcolor{gray!20}\textbf{85.08}      \\
\midrule
Baseline           & 3w32a    &$1\%$    &64.98      &88.64      \\
+RES           & 3w32a    &$1\%$    &65.90      &89.29       \\
+CLC           & 3w32a    &$1\%$    &68.32      &90.53       \\
+RES+CLC           &3w32a      &$1\%$    &\cellcolor{gray!20}\textbf{68.69}     &\cellcolor{gray!20}\textbf{90.89}      \\
\midrule
Baseline           & 4w32a    &$1\%$    &69.37      &90.68      \\
+RES           & 4w32a    &$1\%$    &69.57      &90.77       \\
+CLC           & 4w32a    &$1\%$    &70.94      &91.22       \\
+RES+CLC           &4w32a      &$1\%$    &\cellcolor{gray!20}\textbf{71.25}     &\cellcolor{gray!20}\textbf{91.39}      \\
\midrule
\end{tabular}
\label{tab3}
\end{table}

\subsection{Ablation Studies}
\subsubsection{Effect of Different Components} Table \ref{tab3} demonstrates the impact of different components (RES and CLC) on the performance of the quantized MobileNetV2 model on the CIFAR-100 dataset. It is evident from the table that using either RES or CLC alone can significantly improve the Top-1 and Top-5 accuracy of the model. However, the most substantial performance gains are achieved when both components are combined. Specifically, under the 2-bit weight quantization and 32-bit full-precision activation setting, using RES alone can increase the Top-1 accuracy from 46.84\% to 48.77\% and the Top-5 accuracy from 78.09\% to 78.94\%. Using CLC alone can boost the Top-1 accuracy to 55.25\% and the Top-5 accuracy to 84.03\%. When both RES and CLC are used together, the Top-1 accuracy further increases to 56.36\% and the Top-5 accuracy to 85.08\%. This indicates that RES and CLC are complementary in reducing quantization errors, and their combined use can more effectively enhance the performance of quantized models.

The same trend persists under higher bitwidth quantization settings. For instance, under the 3-bit weight quantization and 32-bit full-precision activation setting, using RES alone can increase the Top-1 accuracy from 64.98\% to 65.90\% and the Top-5 accuracy from 88.64\% to 89.29\%. Using CLC alone can boost the Top-1 accuracy to 68.32\% and the Top-5 accuracy to 90.53\%. When both RES and CLC are combined, the Top-1 accuracy further increases to 68.69\% and the Top-5 accuracy to 90.89\%. This further proves the effectiveness of RES and CLC across different quantization bitwidths.

\begin{table}[t]
\centering
\caption{Coreset selection metric analysis. We report the
Top-1 and Top-5 accuracy of quantized MobileNetV2 on CIFAR-
100.}
\begin{tabular}{c|ccc|cc}
\toprule 
Model         &$d_{\text{EVS}}$   &$d_{\text{DS}}$  &$d_{\text{RES}}$  & Top-1  & Top-5 \\ \toprule 
\multirow{7}{*}{MobileNetV2} % 
  & \checkmark &     &       &53.87       &82.44     \\
  &            & \checkmark &       &54.05       &82.79     \\
  &            &     & \checkmark &55.41       &83.23     \\
  & \checkmark & \checkmark &       &55.25       &84.03     \\
  & \checkmark &     & \checkmark &56.18       &84.25     \\
  &            & \checkmark & \checkmark &55.33       &83.71     \\
  & \checkmark & \checkmark & \checkmark &\cellcolor{gray!20}\textbf{56.36}       &\cellcolor{gray!20}\textbf{85.08 }    \\
\midrule
\end{tabular}
\label{tab4}
\end{table}

\subsubsection{Effect of Coreset Selection Metric} Table \ref{tab4} provides a detailed analysis of the impact of different coreset selection metrics on the performance of the quantized MobileNetV2 model on the CIFAR-100 dataset. The results demonstrate that the choice of selection metric plays a crucial role in determining the effectiveness of the coreset in capturing the essential features of the dataset and, consequently, the performance of the quantized model. Specifically, the table shows the performance of the model when using individual metrics (\(d_{\text{EVS}}\), \(d_{\text{DS}}\), and \(d_{\text{RES}}\)) as well as their combinations. 

When using a single metric for coreset selection, \(d_{\text{RES}}\) (Relative Entropy Score) achieves the highest Top-1 accuracy of 55.42\%, outperforming the other individual metrics. This indicates that \(d_{\text{RES}}\) is particularly effective in capturing the quantization errors, which is a critical factor for improving the performance of the quantized model. The metric \(d_{\text{RES}}\) directly measures the difference between the output distributions of the quantized and full-precision models, making it a strong indicator of how well the coreset can represent the quantization errors. In contrast, \(d_{\text{EVS}}\) and \(d_{\text{DS}}\) focus on gradient-based selection, which is also important but less effective on its own in capturing the quantization errors. For example, using \(d_{\text{EVS}}\) alone results in a Top-1 accuracy of 53.87\%, while \(d_{\text{DS}}\) alone achieves 54.05\%. Although these metrics are useful for identifying samples that contribute significantly to the gradient updates, they do not directly address the quantization errors, which are the primary source of performance degradation in quantized models. The most significant performance improvement is observed when all three metrics (\(d_{\text{EVS}}\), \(d_{\text{DS}}\), and \(d_{\text{RES}}\)) are combined. The combined metric achieves a Top-1 accuracy of 56.36\%, which is higher than any individual metric. This suggests that a comprehensive coreset selection strategy that considers both gradient information and quantization errors is more effective in improving the performance of the quantized model. The combined metric leverages the strengths of each individual metric: \(d_{\text{EVS}}\) and \(d_{\text{DS}}\) provide information about the gradient updates, while \(d_{\text{RES}}\) ensures that the coreset captures the quantization errors effectively. By integrating these aspects, the coreset becomes more representative of the full dataset, leading to better training efficiency and higher accuracy of the quantized model.

\begin{figure*}
\centering
\includegraphics[width=1\textwidth]{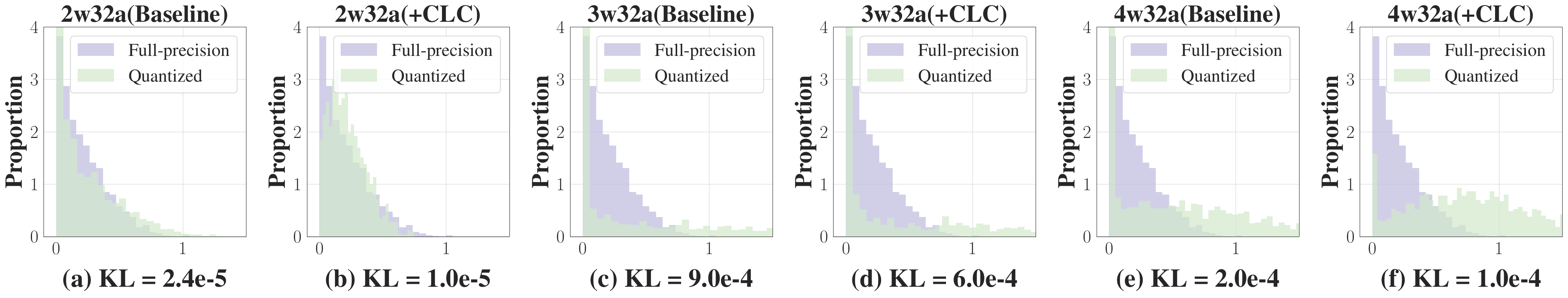} % Reduce the figure size so that it is slightly narrower than the column.
\caption{Comparison of the selected inter-layer output distributions between the full-precision MobileNetV2 model and the quantized model with different quantization settings on CIFAR-100. }
\label{fig3}
% \vspace{-1.5em}
\end{figure*}

\subsubsection{Visualization of Inter-layer Output} Fig. \ref{fig3} illustrates the distribution of intermediate layer outputs for the quantized and full-precision models, highlighting the effectiveness of the proposed Cascaded Layer Correction (CLC) strategy in reducing quantization errors across different quantization bitwidth settings. The figure compares the output distributions of the selected intermediate layer (the second-to-last layer) of MobileNetV2 on the CIFAR-100 dataset, with weights quantized to 2/3/4-bit precision, while activations remain in full precision. The KL divergence values shown in each subplot indicate the difference between the output distributions of the quantized and full-precision models.

In the case of 2-bit weight quantization, the baseline quantized model exhibits a KL divergence of 2.4e-5 compared to the full-precision model. After applying the CLC strategy, the KL divergence is reduced to 1.0e-5, indicating significant alignment of the output distributions. This demonstrates that the CLC strategy effectively mitigates the quantization errors introduced by low-bit weight precision. For 3-bit weight quantization, the baseline KL divergence is 9.0e-4. However, the CLC strategy reduces this divergence to 6.0e-4. Similarly, for 4-bit weight quantization, the baseline KL divergence is 2.0e-4, and the CLC strategy further reduces it to 1.0e-4. The consistent reduction in KL divergence across different quantization bitwidths confirms the effectiveness of the CLC strategy in aligning intermediate layer outputs, thereby improving the overall accuracy and performance of the quantized model.

\begin{table}[t]
\centering
\caption{The results of combining CLC with different coreset selection methods. we report
the Top-1 and Top-5 accuracy of quantized MobileNetV2 on
CIFAR-100. }
\begin{tabular}{c|cc|cc}
\toprule 
Method         & Bit width  &Fraction   & Top-1  & Top-5 \\ \toprule 
Random           & 2w32a    &$1\%$    &40.57      &71.74      \\
Random+CLC           &2w32a      &$1\%$    &\cellcolor{gray!20}\textbf{49.32}     &\cellcolor{gray!20}\textbf{79.01}      \\
\midrule
Forgetting           & 2w32a    &$1\%$     &40.73       &70.69      \\
Forgetting+CLC           &2w32a      &$1\%$     &\cellcolor{gray!20}\textbf{47.29}      &\cellcolor{gray!20}\textbf{77.96}      \\
\midrule
CD           & 2w32a     &$1\%$     &38.61      &68.30      \\
CD+CLC           &2w32a      &$1\%$     &\cellcolor{gray!20}\textbf{48.53}      &\cellcolor{gray!20}\textbf{78.59}      \\
\midrule
Moderate           & 2w32a     &$1\%$     &35.14      &65.47      \\
Moderate+CLC           &2w32a      &$1\%$     &\cellcolor{gray!20}\textbf{46.40}      &\cellcolor{gray!20}\textbf{78.21}      \\
\midrule
ACS           &2w32a     &$1\%$    &46.84       &78.09      \\
ACS+CLC           &2w32a      &$1\%$     &\cellcolor{gray!20}\textbf{55.25}      &\cellcolor{gray!20}\textbf{84.03}      \\
\midrule
\end{tabular}
\label{tab5}
\end{table}

\subsubsection{CLC with Different Coreset Selection Methods} 
Table \ref{tab5} provides a comprehensive evaluation of the CLC strategy when integrated with various coreset selection methods. The results highlight the versatility and effectiveness of the CLC approach in enhancing the performance of quantized models across different coreset selection techniques. Specifically, the combination of CLC with the Random achieves an 8.75\% improvement in performance. This significant enhancement demonstrates that even a simple random selection of coresets can benefit substantially from the CLC strategy, which effectively mitigates the quantization errors in intermediate layers. Similarly, when CLC is combined with Forgetting, the performance improves by 6.56\%. The effectiveness of CLC is further underscored by its combination with other coreset selection methods. For instance, integrating CLC with CD results in a 9.92\% performance improvement, while combining it with Moderate method yields an 11.26\% enhancement. These results suggest that CLC can significantly boost the performance of quantized models regardless of the coreset selection method used. When CLC is combined with  ACS, the performance improvement reaches 8.41\%.  This highlights the robustness of CLC in enhancing model performance, as ACS is already an advanced method that selects samples based on their impact on gradients. Overall, the CLC strategy is highly adaptable and can effectively improve the performance of quantized models when combined with various coreset selection methods.

\begin{figure}[t]
   \centering
   \begin{subfigure}[b]{0.235\textwidth}
       \includegraphics[width=\textwidth]{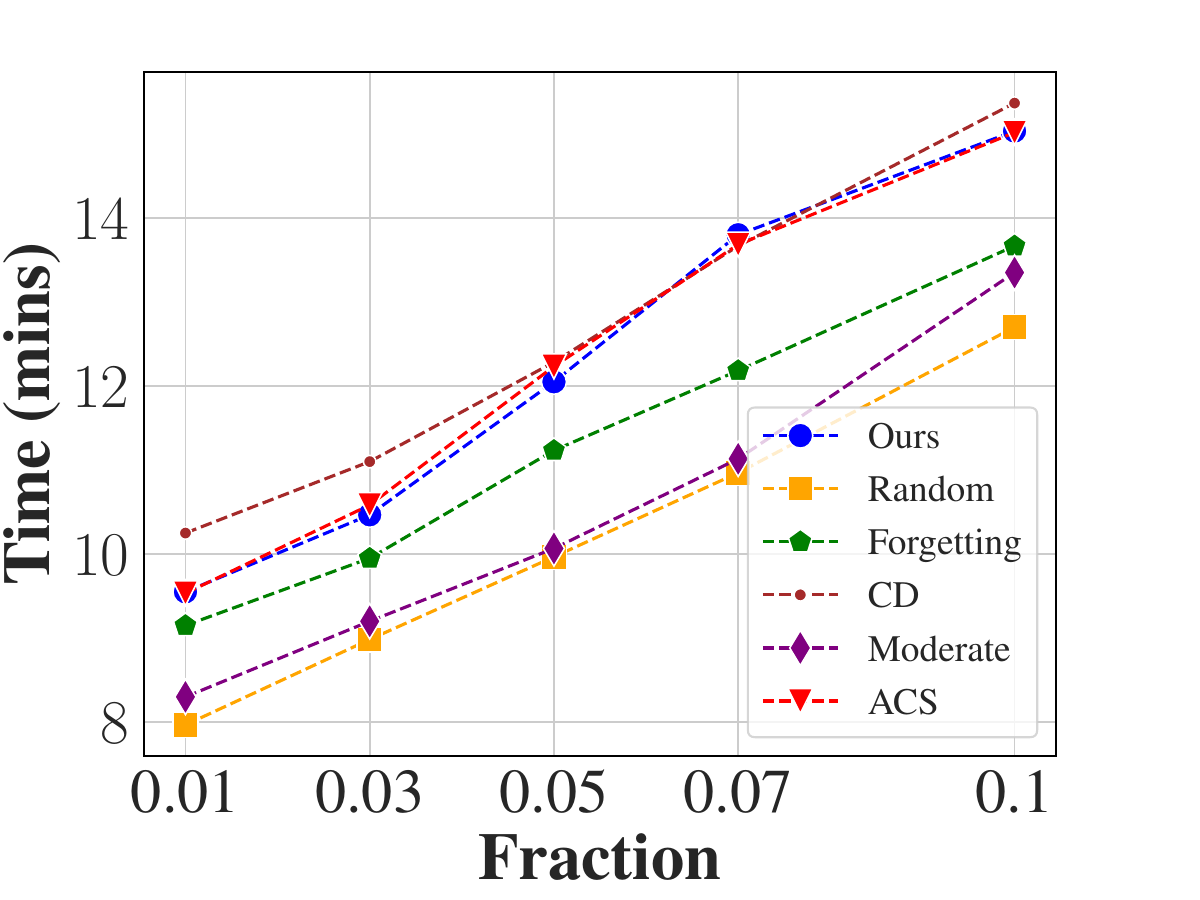}
       \caption{Comparison of Efficiency.}
       \label{fig2:subfig3}
   \end{subfigure}
   \begin{subfigure}[b]{0.235\textwidth}
       \includegraphics[width=\textwidth]{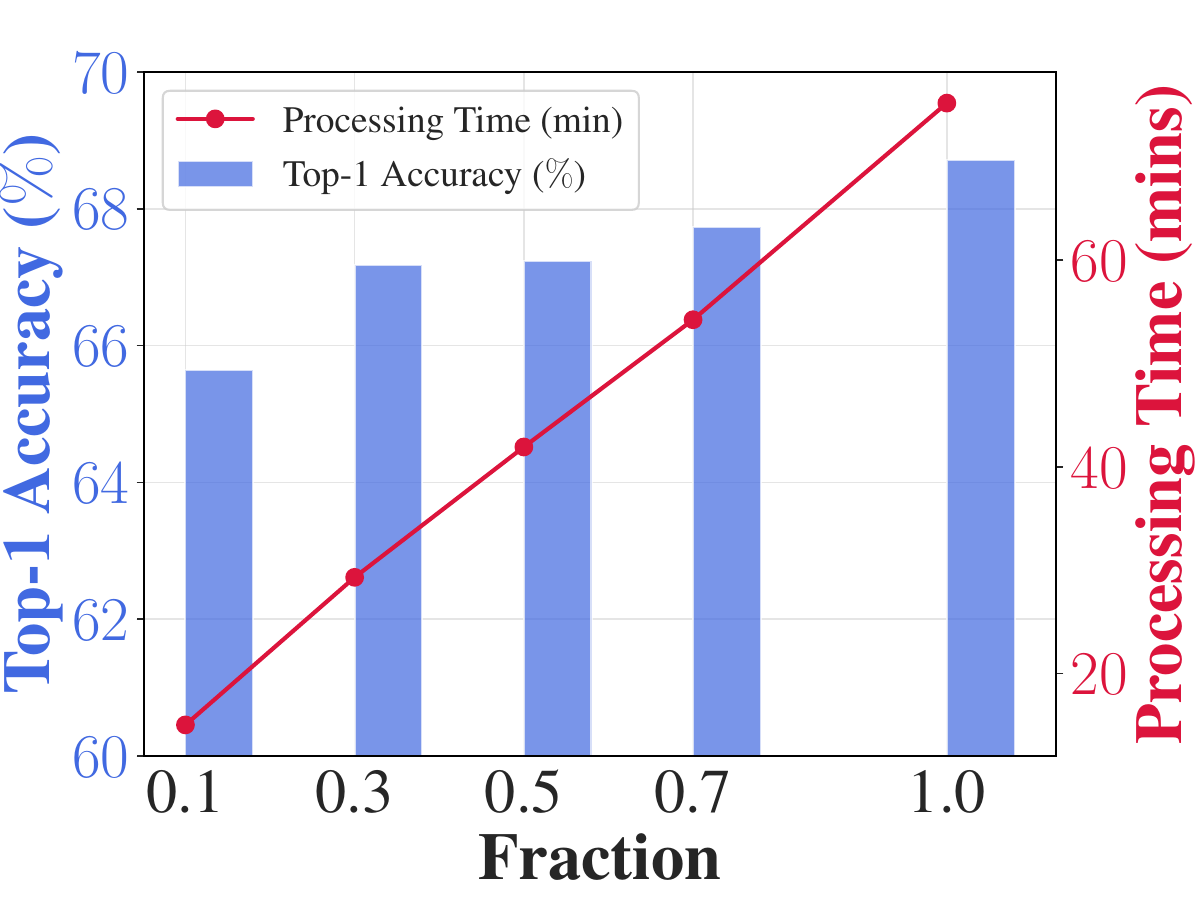}
       \caption{Time-Accuracy Analysis.}
       \label{fig2:subfig4}
    \end{subfigure}
    \caption{(a) The efficiency of different methods.
(b) Balance between efficiency and accuracy of our method. The experimental setup is 2-bit quantized MobileNetV2 on CIFAR-100.}
    \label{fig4}
    % \vspace{-1.3em}
\end{figure}

\subsubsection{Efficiency Analysis}
We compare the efficiency of different methods, as shown in Fig.~\ref{fig2:subfig3}. Random demonstrates the highest efficiency, taking only $7.97$ minutes to train on $1\%$ of the data. However, despite its significant time-saving advantage, Random suffers from severe performance degradation due to the randomness in selecting the coreset. Specifically, Forgetting takes $9.15$ minutes, Moderate takes $8.30$ minutes, ACS takes $9.53$ minutes, and our method takes $9.55$ minutes. Although our method is slightly slower than ACS, it is faster than CD ($10.25$ minutes). More importantly, our method can significantly improve model performance while maintaining high efficiency. This indicates that our method achieves a good balance between efficiency and performance, avoiding the significant performance drop caused by the Random and Moderate methods' overemphasis on efficiency and overcoming the low efficiency of CD method. Our method provides an effective solution for efficient and high-performance model training on small-scale coresets.

We further analyze the balance between efficiency and accuracy for our method. As shown in Fig.~\ref{fig2:subfig4}, when training with $20\%$ of the data, our method takes $29.32$ minutes and achieves a Top-1 accuracy of $67.18\%$. In contrast, training with the full dataset ($100\%$) takes $75.22$ minutes, resulting in a Top-1 accuracy of $68.72\%$. The time cost increases by $157\%$, but the accuracy only improves by $1.54\%$. This indicates that using a small fraction of the data (e.g., $20\%$) with our method can achieve a significant reduction in training time while maintaining a high level of accuracy. The marginal gain in accuracy from increasing the data fraction beyond $20\%$ is relatively small compared to the substantial increase in training time. Therefore, our method provides an effective trade-off between efficiency and accuracy, especially when computational resources are limited or when faster training is desired without significant loss in model performance.

\begin{table}[t]
\centering
\caption{Results with larger fractions. we report
the Top-1 and Top-5 accuracy of quantized MobileNetV2 on
CIFAR-100. The full-precision MobileNetV2 achieves a Top-1 accuracy of 72.56\% and a Top-5 accuracy of 91.93\% on CIFAR-100. }
\begin{tabular}{c|cc|cc}
\toprule 
Method         & Bit width  &Fraction   & Top-1  & Top-5 \\ \toprule 
ACS           & 2w32a    &$20\%$    &65.03      &89.14      \\
Ours           &2w32a      &$20\%$    &\cellcolor{gray!20}\textbf{65.86}     &\cellcolor{gray!20}\textbf{89.70}      \\
\midrule
ACS           & 2w32a    &$30\%$    &66.32      &89.19      \\
Ours           &2w32a      &$30\%$    &\cellcolor{gray!20}\textbf{67.18}     &\cellcolor{gray!20}\textbf{90.53}      \\
\midrule
ACS           & 2w32a    &$50\%$     &67.13       & 90.28     \\
Ours           &2w32a      &$50\%$     &\cellcolor{gray!20}\textbf{67.24}      &\cellcolor{gray!20}\textbf{90.50}      \\
\midrule
ACS           & 2w32a     &$70\%$     &67.51     &90.06      \\
Ours           &2w32a      &$70\%$     &\cellcolor{gray!20}\textbf{67.73}      &\cellcolor{gray!20}\textbf{91.01}      \\
\midrule
ACS           & 2w32a     &$100\%$     &67.67      &90.31     \\
Ours           &2w32a      &$100\%$     &\cellcolor{gray!20}\textbf{68.72}      &\cellcolor{gray!20}\textbf{91.21}      \\
\midrule
\end{tabular}
\label{tab6}
\end{table}

\subsubsection{Effect of Larger Fractions}
To provide a more comprehensive validation of our method, we present results with larger fractions. As shown in Table \ref{tab6}, our approach continues to demonstrate superior performance compared to the current state-of-the-art method, ACS, even at larger fractions. At smaller data fractions (such as \(20\%\) and \(30\%\)), our method outperforms ACS in terms of both Top-1 and Top-5 accuracy, achieving improvements of \(0.83\%\) and \(0.86\%\) in Top-1 accuracy, and \(0.56\%\) and \(1.34\%\) in Top-5 accuracy, respectively. This demonstrates that our method can more effectively utilize limited samples for training when data is restricted, thereby enhancing model performance. However, as the data fraction increases, the magnitude of performance improvement gradually diminishes. At a \(50\%\) data fraction, the Top-1 accuracy is improved by only \(0.11\%\), and the Top-5 accuracy by \(0.22\%\); while at a \(100\%\) data fraction, the Top-1 accuracy is enhanced by \(1.05\%\), and the Top-5 accuracy by \(0.90\%\). This phenomenon indicates that when the data volume approaches full-data training, the room for model performance improvement is relatively limited. Nevertheless, our method still manages to deliver certain performance gains, which further proves its effectiveness across different data scales.

\begin{table}[t]
\centering
\caption{Results with different teacher models. The student model is the quantized ResNet-18 with both weights and activations quantized to 2-bit. We report
the Top-1 accuracy. }
\begin{tabular}{c|ccc}
\toprule 
Teacher         & ResNet-18  &ResNet-34   & ResNet-101   \\ \toprule 
Top-1(\%)            & 46.24   &46.27    &46.34          \\
\midrule
\end{tabular}
\label{tab7}
\end{table}

\subsubsection{Effect of Different Teacher Models }
For the ResNet-18 model on ImageNet-1K, we demonstrate the impact of different teacher models on the results. As shown in Table \ref{tab7}, more complex teacher models can guide the student model to achieve better performance. Specifically, when using ResNet-101 as the teacher model, the Top-1 accuracy of the quantized ResNet-18 with both weights and activations quantized to 2-bit reaches 46.34\%, which is slightly higher than that achieved with ResNet-34 as the teacher model at 46.27\% and ResNet-18 as the teacher model at 46.24\%. This indicates that during the knowledge distillation process, more complex teacher models can provide more informative guidance for the student model, thereby enhancing the performance of the quantized model.

\begin{figure*}[t]
\centering
\begin{minipage}[t]{0.25\textwidth}
\centering
\includegraphics[width=\textwidth]{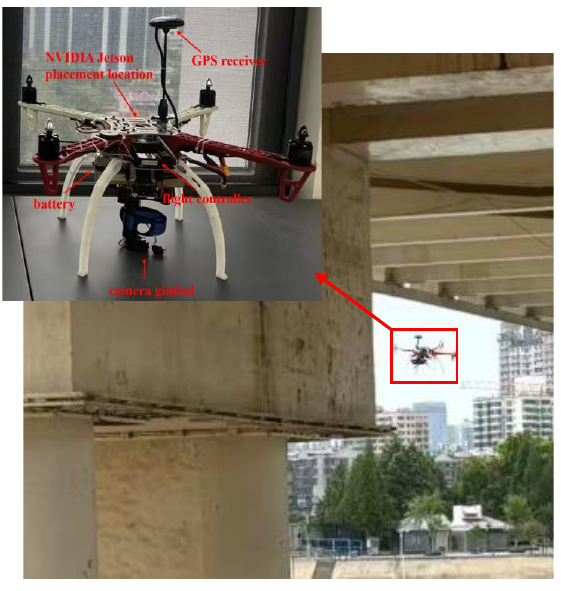} 
\caption{UAV prototype.}
\label{fig5}
\end{minipage}
% \hfill 
\begin{minipage}[t]{0.7\textwidth}
\centering
\includegraphics[width=\textwidth]{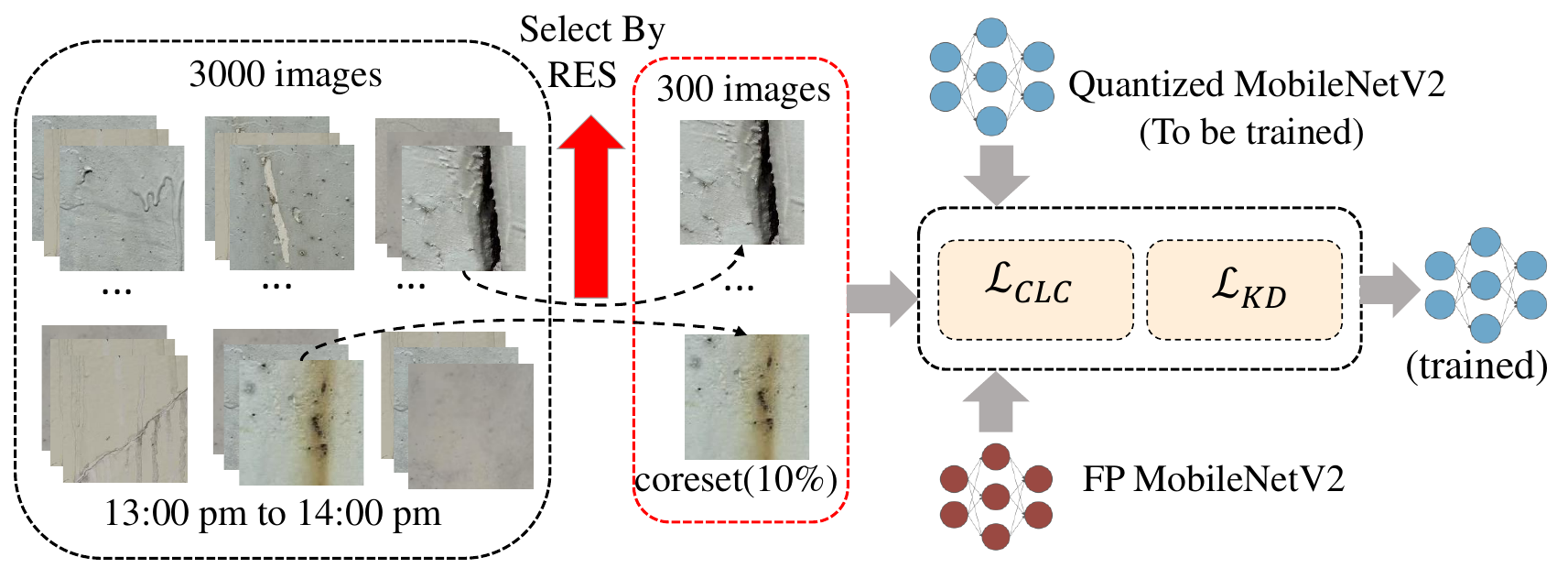} 
\caption{The process of coreset selection and quantized model training.}
\label{fig6}
\end{minipage}
\end{figure*}

\begin{figure}[t]
\centering
\includegraphics[width=0.48\textwidth]{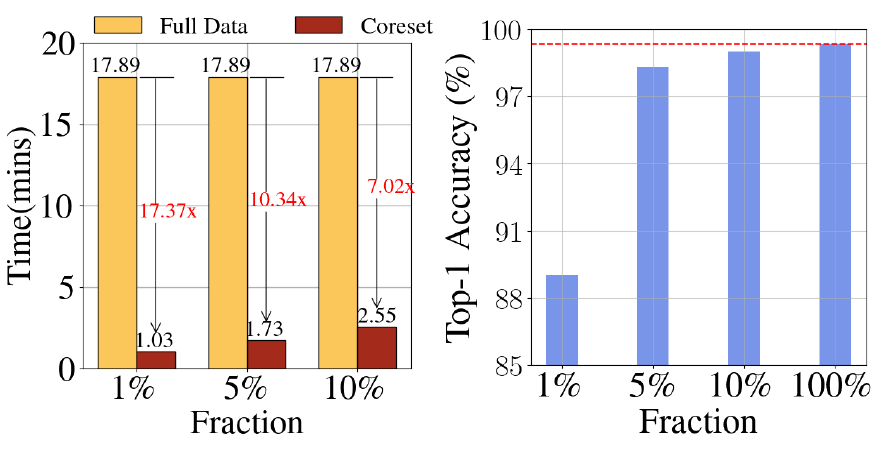} 
\caption{Analysis of Top-1 Accuracy and Training Time.
}
\label{fig7}
\end{figure}

\section{A Case Study}
We evaluate QuaRC on a UAV‐based crack‐detection task for infrastructure inspection. In this setting, UAVs equipped with onboard vision systems must run lightweight quantized models under tight compute and energy budgets. To accommodate bridge‐ and building‐specific variations in crack morphology and lighting—and to avoid transmitting sensitive inspection images to the cloud—quantized models must be retrained locally using newly captured data. However, full‐dataset QAT onboard is infeasible on resource‐constrained hardware. By selecting a small, representative subset (1–10\% of data), QuaRC enables rapid on‐device retraining of quantized models without degrading detection accuracy.

Figure~\ref{fig5} shows our UAV platform, which integrates an NVIDIA Jetson Orin NX 16 GB edge module and a Homer digital link (1.5 km range). We begin by uploading full‐precision MobileNetV2 weights (pretrained on ImageNet‐1K) to the UAV. The UAV then acquires 3,300 concrete‐surface images (3,000 for training, 300 for testing) over a one‐hour flight and annotates cracks offline. As depicted in Figure~\ref{fig6}, we first fine‐tune the full‐precision model for 20 epochs on all 3,000 images, achieving 99.33\% Top-1 accuracy. Next, we run the RES algorithm to select the 300 most informative images (10\%), then retrain the 2-bit quantized MobileNetV2 with our Cascaded Layer Correction strategy. As shown in Figure~\ref{fig7}, this coreset QAT completes in 2.55 minutes—7.02× faster than full‐dataset QAT—while delivering 99.00\% Top-1 accuracy (vs.\ 99.33\% on the full dataset). These results confirm that QuaRC substantially accelerates on-device QAT while preserving model performance.

\section{Related Work and Discussion}

In this section, we first introduce the relevant studies on integrating coreset selection methods with QAT from the aspects of \textit{model quantization} and \textit{coreset selection}. Subsequently, we discuss the limitations and potential extensions of QuaRC.

\subsection{Related Work}
\noindent\textbf{Model quantization:} As one of the key technologies for deep learning model compression, model quantization~\cite{yang2024communication,luo2021binarized,luo2025bi} aims to reduce computational costs and memory usage by converting full-precision floating-point numbers into low-bit integers, thereby enabling models to be deployed on resource-constrained hardware. Quantization methods are categorized into Post-Training Quantization (PTQ) \cite{fang2020post,wang2020towards} and Quantization-Aware Training (QAT) \cite{zhou2016dorefa,esser2020learned}, with QAT showing better performance for extremely low bitwidths (4 bits or fewer). Techniques like Dorefa-net \cite{zhou2016dorefa} focus on accelerating training and inference with low bitwidth parameters, while LSQ \cite{esser2020learned} and LSQ+  \cite{bhalgat2020lsq+} use learnable quantization parameters to achieve better accuracy. Although QAT provides superior performance, it requires retraining on the entire dataset, increasing computational costs. Our work focuses on reducing the computational overhead of QAT so that it can be applied to resource-constrained edge devices.

\noindent\textbf{Coreset selection:} By selecting a subset that can represent the key features of the entire dataset, coreset selection techniques can improve training efficiency, reduce the demand for computational resources, and maintain model performance. Common methods for evaluating sample importance in coreset selection include geometry-based \cite{agarwal2020contextual,xia2023moderate}, decision boundary-based \cite{ducoffe2018adversarial,margatina2021active}, and gradient-based \cite{mirzasoleiman2020coresets,paul2021deep} approaches. These coreset selection methods are designed for full-precision models and do not consider the characteristics of quantized models. Fortunately, ACS \cite{huangrobust} considers the impact of samples on the quantized model from the perspective of gradients, and for the first time, applies coreset selection to QAT. Due to the consideration of quantization characteristics, the performance of ACS is significantly better than traditional methods. Unlike ACS, we approach from the perspective of quantization errors, selecting the samples that best reflect the model's quantization errors as the coreset.

Applying coreset selection to QAT can enhance its training efficiency. However, when using a small-scale coreset for QAT, there can be a notable decline in the performance of the quantized model. Our work aims to \textbf{mitigate this performance degradation of the quantized model when employing a small-scale coreset for QAT.}

\subsection{Discussion}

\noindent\textbf{Limitations:} Despite the significant improvements achieved by QuaRC in QAT on small-scale coresets, there are still some limitations. First, the calculation of the RES requires both the full-precision model and the quantized model to perform inference on all samples simultaneously, which introduces additional computational overhead during the coreset selection phase. Although the overhead is significantly reduced compared to the backpropagation process, it can still be a bottleneck for edge devices with extremely limited resources.  Second, the CLC strategy demands that the full-precision correction model be strictly aligned in structure with the quantized model. If the teacher model and the correction model are not consistent, the training cost will increase. Experiments have shown that although using a deeper teacher model can slightly improve accuracy, the optimal solution for on-device training is to set the teacher model and the correction model as the same model, thereby achieving the best balance between accuracy and efficiency.

\noindent\textbf{Potential Extensions:} While our current work focuses on enhancing quantization-aware training for image classification models, the principles underlying QuaRC—quantization error-aware coreset selection and intermediate layer correction—exhibit promising potential for broader applications. RES selects the coreset by capturing quantization errors, and its nature is task-agnostic, allowing it to be easily adapted to other visual tasks such as object detection\cite{zhao2019object, zou2023object} and semantic segmentation\cite{mo2022review,thisanke2023semantic}. CLC mitigates error accumulation by aligning intermediate features layer-by-layer and is equally applicable to the quantization training of hierarchical network architectures, such as those based on the Transformer framework\cite{devlin2019bert, brown2020language}. However, the current limitations in computational power and memory on edge devices still pose significant challenges for local training of object detection models and large models based on the Transformer architecture. Nevertheless, with the continuous advancement of edge computing hardware, the computational bottlenecks on edge devices are expected to be gradually overcome. Against this backdrop, our future work will focus on the aforementioned directions, aiming to facilitate the implementation of efficient quantization-aware training techniques on edge devices.

\section{Conclusion}
This paper proposes an innovative solution, QuaRC, to address the core challenge of Quantization-Aware Training (QAT) on edge devices: the accumulation of quantization errors when using small-scale coresets. In our research, we delve into two major challenges encountered when applying coreset selection to QAT: 1) the failure to consider quantization errors during coreset selection, and 2) the difficulty in eliminating intermediate layer errors in the quantized model during the training phase. To tackle these challenges, we introduce the Relative Entropy Score as the selection metric in the coreset selection stage and employ the Cascaded Layer Correction strategy during the training phase of the quantized model to eliminate errors in the intermediate layers. Extensive experiments demonstrate the significant advantages of our proposed method over existing approaches. Our work effectively mitigates the performance degradation of quantized models when using small-scale coresets, thereby making it possible to perform QAT directly on edge devices.

\bibliographystyle{IEEEtran}
% \bibliography{references}
\bibliography{main}

\vfill

\end{document}